\documentclass{article}
\usepackage{graphicx} 
\usepackage{parskip}
\usepackage{enumitem}
\usepackage{todonotes}
\usepackage{dirtytalk}
\usepackage{float}
\usepackage{url}
\usepackage{multicol, multirow,siunitx}
\usepackage{amsfonts,amsmath,amsthm}
\usepackage{booktabs}
\usepackage[margin=1in]{geometry}
\usepackage{bm}
\usepackage{natbib}
\usepackage{pgfplots}
\usepgfplotslibrary{fillbetween}
\pgfplotsset{compat=1.18}
\usepackage{tikz}
\usepackage{physics}
\usepackage{mathtools}
\usetikzlibrary{shapes.geometric, arrows}
\theoremstyle{definition}
\newtheorem{definition}{Definition}[section]
\newtheorem{assumption}{Assumption}
\theoremstyle{plain}

\newtheorem{proposition}{Proposition}
\newtheorem{lemma}{Lemma}
\theoremstyle{remark}
\newtheorem{remark}{Remark}

\title{Stability and Generalization in Looped Transformers}
\author{Asher Labovich}
\date{April 2026}

\begin{document}

\maketitle

\begin{abstract}
Looped transformers promise test-time compute scaling by spending more iterations on harder problems, but it remains unclear which architectural choices let them extrapolate to harder problems at test time rather than memorize training-specific solutions. We introduce a fixed-point based framework for analyzing looped architectures along three axes of stability -- reachability, input-dependence, and geometry -- and use it to characterize when fixed-point iteration yields meaningful predictions. Theoretically, we prove that looped networks without recall have countable fixed points and cannot achieve strong input-dependence at any spectral regime, while recall combined with outer normalization reliably produces a regime in which fixed points are simultaneously reachable, locally smooth in the input, and supported by stable backpropagation. Empirically, we train single-layer looped transformers on chess, sudoku, and prefix-sums and find that downstream performance tracks the framework's predictions across tasks and architectural configurations. We additionally introduce internal recall, a novel recall placement variant, and show that it becomes competitive with -- and on sudoku, substantially better than -- standard recall placement once outer normalization is applied.
\end{abstract}

\section{Introduction}
Recent progress in reasoning with large language models has come largely from chain-of-thought (CoT) methods, in which models produce a hidden scratchpad of tokens before responding \citep{wei2023chainofthoughtpromptingelicitsreasoning, Guo_2025}. CoT has several structural limitations: it requires autoregressive decoding that scales poorly in latency and energy, it forces intermediate computation to reside in discrete tokens, and its reasoning depth is bounded by the token budget the model was trained on. Looped transformers offer an alternative path to test-time compute scaling by instead training a single weight-tied network whose iteration count can, in principle, be scaled with problem difficulty. Recent work has shown that such models can match or exceed much larger fixed-depth transformers on reasoning benchmarks \citep{geiping_scaling_2025, wang_hierarchical_2025, jolicoeur-martineau_less_2025}, and that weight-tying itself induces useful inductive biases towards algorithmic reasoning \citep{saunshi_inductive_2024, saunshi_reasoning_2025}. Beyond matching fixed-depth performance with few parameters, looped transformers offer a capability unavailable to fixed-depth models: a model that has learned a stable algorithm may be able to extrapolate beyond its training iteration depth, solving problems harder than any it was trained on simply by iterating more.

Whether looped transformers actually achieve this extrapolation remains unclear. Empirical work has converged on two architectural ingredients -- recall (conditioning each iterate on the original input) and outer normalization -- as apparently necessary for stable looped computation \citep{bansal_end--end_2022, anil_path_2022, geiping_scaling_2025}. However, neither is theoretically justified, and generalization results across tasks and scales remain inconsistent \citep{yang2024loopedtransformersbetterlearning}. It is not obvious why recall should be \textit{necessary} rather than merely helpful, why outer normalization (typically disliked in fixed-depth transformers due to gradient instability \citep{xiong2020layernormalizationtransformerarchitecture}) becomes beneficial in looped settings, or how these ingredients interact to enable stable computation.

This paper\footnote{Code available at \url{https://github.com/ashlab11/generalization}} provides a unified account of why these choices matter, framed around a fixed-point analysis of the looped architecture. We argue that a looped model can be trusted to run for arbitrarily many iterations without degrading (avoiding the problem known as \say{overthinking}) only if it has fixed points which are reachable, input-dependent, and geometrically robust in parameter space -- three properties we call the "axes of stability". We show that each architectural choice -- recall existence, recall placement, and outer normalization -- affects a distinct subset of the axes in ways that directly predict downstream performance.

Our key theoretical result is that recall combined with outer normalization reliably yields a regime in which all three axes are simultaneously satisfied. We validate this framework empirically by training single-layer looped transformers on chess, sudoku, and prefix-sums across a grid of normalization and recall choices, and find that downstream performance -- both on the training distribution and harder OOD problems -- tracks the framework's predictions across tasks and configurations. Over the course of this analysis, we also introduce \textit{internal recall}, a novel placement variant whose narrow stability region without outer normalization provides direct empirical support for the geometry axis of our framework.

\section{Related Work}
In this section, we contextualize our work on looped transformers among the many different architecture, training, and benchmarking choices made in previous work. 

\paragraph{Inductive Biases.} Recent work studying weight-tied architectures against non-looped models has consistently found that weight sharing exhibits useful inductive biases toward reasoning. Comparing FLOP-matched models on memorization tasks (e.g., closed-book QA) and reasoning tasks (e.g., mathematics), \citet{saunshi_inductive_2024} and \citet{saunshi_reasoning_2025} find that weight-tied models trade off memorization capacity in favor of stronger reasoning ability. From a theoretical standpoint, \citet{merrill2025exactexpressivepowertransformers} show that looped transformers with padding efficiently solve parallelizable problems in a manner unavailable to fixed-depth transformers. While these results collectively establish weight-tied architectures as a compelling alternative to standard transformers, they are largely orthogonal to the present work: all consider models run for a \textit{fixed} number of loops, and none address whether additional test-time iterations can solve problems harder than those seen during training.

\paragraph{Extrapolation in Looped Transformers.} Several works have studied whether looped models can solve harder problems at test time by running more iterations. \citet{bansal_end--end_2022} first study this systematically for CNNs, finding that recall (Definition \ref{def:recall}) is empirically necessary to avoid degradation under repeated iterations, and introduce the progressive loss mechanism used in this work. \citet{geiping_scaling_2025} scale this to a large transformer, finding that outer normalization is additionally necessary for stability, though performance plateaus beyond the number of loops seen in training. \citet{yang2024loopedtransformersbetterlearning} corroborate that stable, recall looped transformers can nonetheless fail to generalize to OOD problems. Across all three, recall is consistently identified as necessary but never theoretically justified, and generalization results are mixed with no consistent picture across tasks or scales. Our work addresses both: we prove theoretically why recall is necessary for input-dependent stability, and provide theoretical clarity on the role of outer normalization in enabling generalization -- while confirming empirically that no single architecture dominates across tasks.

\paragraph{Deep Equilibrium Models.} Deep Equilibrium Models \citep{bai2019deepequilibriummodels} reframe the forward pass of a looped network as finding the fixed point of a learned function, solved implicitly via root-finding rather than via iteration. This fixed-point framing directly motivates our stability analysis: a model that converges to a fixed point necessarily avoids degradation upon additional iterations, preventing overthinking. However, unlike DEQ, which treats convergence as a \textit{design} objective enforced by the solver, we study fixed points as a property of the model itself, analyzing when and why it arises based on architectural choices.

\paragraph{Adaptive Computation.} Several works have explored dynamically reducing the number of looping iterations at inference time. \cite{graves2017adaptivecomputationtimerecurrent} introduces ACT, allowing recurrent networks to halt early based on a learned differentiable output head. \cite{dehghani_universal_2018} apply this to transformers via the Universal Transformer, using ACT to allocate less compute to easier tokens. \citet{banino2021pondernetlearningponder} advances these methods by introducing a KL-divergence regularization term to incentivize a geometric distribution over loops, improving hyperparameter stability while retaining the differentiability of ACT. Unlike our work, these models aim to \textit{reduce} unnecessary computation rather than \textit{increase} iterations to solve harder problems -- i.e. the halting mechanism is a tool for efficiency, not generalization.

\paragraph{Deep Supervision in Looped Transformers.} \citet{wang_hierarchical_2025} and \citet{jolicoeur-martineau_less_2025} propose training looped models with 
\say{deep supervision} -- applying loss at intermediate iterations similarly to \citet{bansal_end--end_2022}'s progressive loss scheme -- and find that these models outperform compute-matched transformers on common reasoning tasks. However, as both works evaluate only up to their training iteration depth and never on harder problems than in training, it remains unclear to what extent these gains reflect the inductive biases of weight-tied architectures versus an ability to extrapolate with additional iterations.

\section{Preliminaries}
\subsection{Definitions}\label{sec:defs}
This section provides a short description of looped architectures from a mathematical perspective, and is useful for the propositions proven in the upcoming sections. 

\begin{definition}[Looped Network]\label{def:looped-network}
    Given parameters $\theta \in \mathbb{R}^p$, embedding dimension $d$, and sequence length $L$, a \emph{looped network} is a discrete dynamical system on the state space $\mathbb{R}^{d \times L}$ defined by the recurrence
    \[
        x_{t+1} = f_\theta\left(x_t, \, \{x_i\}_{i=0}^{t-1}\right) \qquad t = 1, \dots, T
    \]
    where $f_\theta$ is some network parameterized by $\theta$ and $x_t \in \mathbb{R}^{d \times L}$ is the hidden state at time t. We call $T$ the \emph{iteration depth}. The network is initialized with $x_1 = f_\theta(e, x_0)$ where $e$ is a free parameter often chosen to be zero, $x_0$, or Gaussian noise. This formulation of looped networks is quite broad; the following definitions narrow this to the two cases studied in this paper.
\end{definition}

\begin{definition}[Autonomous Network]\label{def:autonomous}
    A looped network is \emph{autonomous} if $f_\theta$ depends only on the current state, i.e.,
    \[
        x_{t+1} = f_\theta(x_t).
    \]
    We choose the term \say{autonomous} to mimic autonomous systems in differential equations theory, which arise when the rate of change depends only on the current state.
\end{definition}

\begin{definition}[Recall Network]\label{def:recall}
    A looped network is a \emph{recall network} if $f_\theta$ depends on only the current state and the initial input, i.e.,
    \[
        x_{t+1} = f_\theta(x_t, \, x_0).\qquad t = 1, \dots, T
    \]
    Recall models are initialized at $x_1$ equivalently to Definition \ref{def:looped-network}.
\end{definition}

A recall network can be thought of as \say{middle ground} between autonomous and the more general looped models, taking in a second parameter (unlike autonomous models) but limiting that parameter to only the initial state (unlike general looped models). Figure \ref{fig:autonomous-vs-recall} compares autonomous and recall models visually.

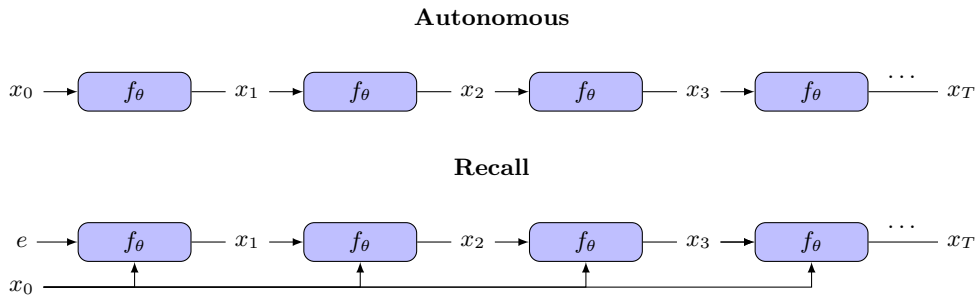
\begin{figure}[h!]
\centering
\begin{tikzpicture}[
    node distance=0.9cm and 0.9cm,
    >=latex,
    box/.style={draw, rounded corners, minimum width=1.5cm, minimum height=0.8cm, align=center, font=\small},
    smallbox/.style={draw, rounded corners, minimum width=1.5cm, minimum height=0.7cm, align=center, font=\small},
    xbox/.style={draw, rounded corners, minimum width=1.5cm, minimum height=0.5cm, align=center, font=\small, 
    fill=gray!50},
    fbox/.style={draw, rounded corners, minimum width=1.5cm, minimum height=0.5cm, align=center, font=\small, 
    fill=blue!25},
    gbox/.style={draw, rounded corners, minimum width=1.5cm, minimum height=0.5cm, align=center, font=\small, 
    fill=teal!50},
    phibox/.style={draw, dotted, rounded corners, minimum width=1.5cm, minimum height=0.5cm, align=center, font=\small},
    lbl/.style={font=\small}
]

\node[lbl] (title1) at (4.75,2.2) {\textbf{Autonomous}};
\foreach \i in {0,1,2,3} { 
    \node[lbl] (x\i) at (\i*3 - 1.5, 1.2) {$x_\i$};
    \node[fbox] (f\i) at (\i*3, 1.2) {$f_\theta$};
}
\foreach \i in {0, 1, 2}{
    \pgfmathtruncatemacro{\j}{\i+1};
    \draw[->] (x\i) -- (f\i);
    \draw[-] (f\i) -- (x\j);
}
\draw[->] (x3) -- (f3);
\node[lbl] (xT) at (11, 1.2) {$x_T$};
\draw[-] (f3) -- (xT);
\node[lbl] (dots) at (10.2, 1.4) {$\hdots$};

\node[lbl] (title1) at (4.75,0.2) {\textbf{Recall}};
\node[lbl] (xr0) at (-1.5, -1.4) {$x_0$};
\foreach \i in {1,2,3} { 
    \node[lbl] (xr\i) at (\i*3 - 1.5, -0.8) {$x_\i$};
}
\foreach \i in {0,1,2,3} { 
    \node[fbox] (fr\i) at (\i*3, -0.8) {$f_\theta$};
}
\foreach \i in {0, 1, 2}{
    \pgfmathtruncatemacro{\j}{\i+1};
    \draw[-] (fr\i) -- (xr\j);
}
\draw[->] (xr0) -- (0, -1.4) -- (fr0.south); 
\foreach \i in {1,2,3} {
    \draw[->] (xr\i) -- (fr\i);
    \draw[->] (xr0) -- (\i*3, -1.4) -- (fr\i.south);
}
\draw[->] (xr3) -- (fr3);
\node[lbl] (xrT) at (11, -0.8) {$x_T$};
\node[lbl] (dots) at (10.2, -0.6) {$\hdots$};
\draw[-] (fr3) -- (xrT);
\node[lbl] (e) at (-1.5, -0.8) {$e$};
\draw[->] (e) -- (fr0);
\end{tikzpicture}
\caption{Comparison of autonomous and recall networks. In the former, the network only depends on the previous state $x_{t-1}$; in the latter, it also depends on the initial state $x_0$.}
\label{fig:autonomous-vs-recall}
\end{figure}

\begin{definition}[Outer Normalization]\label{def:post-norm}
    A looped network uses \emph{outer normalization} if some normalizing function is applied to the output of each iteration: \[
        x_{t+1} = \phi_\theta\left(f_\theta\left(x_t, \{x_i\}_{i=0}^{t-1}\right)\right)
    \]
    where $f_\theta$ may itself contain arbitrary internal normalization.
\end{definition}

\begin{remark}\label{rem:post-norm}
    Practitioners have considerable flexibility in their choice of $\phi$; in this paper, we consider RMSNorm and GRU normalization. When $\phi_\theta$ is RMSNorm or LayerNorm, outer normalization represents what the standard transformer literature calls \emph{post-norm}, and we use that term in this setting; the GRU case has no analogous name and we refer to it simply as GRU (outer) normalization. In fixed-depth (non-looped) transformer literature, \emph{pre-norm} -- in which normalization is applied only inside $f_\theta$, along with an unnormalized identity residual term -- is more common, largely due to gradient instability problems associated with post-norm \citep{xiong2020layernormalizationtransformerarchitecture}. We show in Section~\ref{sec:recall-outer} that recall architectures create a stability regime which benefits from outer normalization in a manner unlike fixed-depth transformers. In addition, post-norm in particular constrains each token independently to a compact, convex set: under RMSNorm, $x_t \in \left(B^d(\beta_\theta, \gamma_\theta)\right)^L$, and under LayerNorm, $x_t \in \left(B^{d-1}(\beta_\theta, \gamma_\theta)\right)^L$ (where the $\epsilon$ term ensures continuity and allows points to lie in the interior), both of which are compact and convex for fixed $\theta$. We exploit these facts in Sections~\ref{sec:autonomous} and \ref{sec:recall-outer}.
\end{remark}

\paragraph{Smoothness Assumption.} Separately from our definitions, we also note that throughout this paper, we assume all models are \textit{smooth}, which can be guaranteed by using smooth activations only\footnote{Many modern transformers use GeLU or SwiGLU activations, both of which are smooth. In addition, RMSNorm/LayerNorm are both smooth as long as the denominator contains a $\epsilon$ term.}. This assumption is necessary for multiple key proofs.

\subsection{Axes of Stability}\label{sec:axes}
Throughout this paper, we discuss architectural choices that improve or harm the ability of looped transformers to learn algorithms rather than memorize solutions. To make this analysis precise, we first need a vocabulary for the distinct ways such a model can succeed or fail. We emphasize here that our analysis focuses on fixed points of the looped computation. In practice, we want looped models we can trust to run for any number of iterations without overthinking or degrading. Only fixed points offer this guarantee, so we take reaching a correct fixed point as  our criterion for success.

For each task, we separate easy data (matching the training distribution) from hard data (out-of-distribution along the generalization axis of interest -- longer sequences, harder instances). We identify three properties (\say{axes}) of a looped model that jointly determine its success on easy and/or hard data: \textbf{reachability, input-dependence, and geometry}. We describe each in turn below.

\paragraph{Reachability.} A looped model has good reachability if repeated iteration converges to a fixed point rather than diverging or cycling. In practice, this means running the model for more iterations moves it toward a stable answer rather than away from one. Reachability is the most basic dynamical requirement for looped computation.

\paragraph{Input-dependence.} It is trivial to build a model with perfect reachability by ensuring a contraction map and relying on the Banach fixed point theorem. Such a map is, of course, useless as an algorithm -- its output is the same regardless of input. A useful looped model must therefore have fixed points that vary meaningfully with the input. Architectures differ in \textit{how} they achieve this, and as we will see, the mechanism matters as much as whether input-dependence exists at all.

\paragraph{Geometry.} Even among models that have parameter regimes of fixed-point reachability and input-dependence, the \textit{shape} of that regime matters. Stability regions that are highly direction-dependent -- narrow slivers in parameter space rather than round, broad neighborhoods -- force stable configurations into a narrow band of the parameter space, making training outcomes fragile and hyperparameter-sensitive. We refer to this directional bias as anisotropy, and show in Section \ref{sec:recall-no-outer} and Appendix \ref{app:anisotropy} that internal and external recall without outer normalization differ sharply along this axis.

Having defined the axes of stability, we now examine how specific architectural choice -- recall existence, recall placement, and outer normalization -- succeed or fail along each axis, and its effect on downstream performance.

\section{Why Autonomous Networks Fail}\label{sec:autonomous}
Much of the looped model literature notes the flaws of autonomous networks. \citet{bansal_end--end_2022} find it achieves low accuracy on a variety of tests while degrading upon repeated iterations, while \citet{yang2024loopedtransformersbetterlearning}  observe that networks lacking input injection (autonomous) produce solutions that are \say{essentially random or unpredictable}. Yet despite this empirical consensus, there is little direct theoretical analysis of \textit{why} autonomous networks exhibit these problems -- and, crucially, \textit{whether they are an inherent limitation or merely an artifact of poor design choices}. In this section, we argue for the former: under mild assumptions on the underlying model architecture, we prove that autonomous networks exhibit extremely weak input-dependence, suggesting these failures are not incidental but structural.

We make one key assumption throughout this section to help in proving key measure-theoretic statements:
\begin{assumption}\label{assum:transversal}
    Consider function $g(x_t, \theta): \mathbb{R}^{d \times L} \times \mathbb{R}^p \coloneqq f_\theta(x_t) - x_t$ chosen so that $g(x_t, \theta) = 0 \Leftrightarrow x_t$ is a fixed point of $f_\theta$. We assume that at all points where $g(x_t, \theta) = 0$, $\begin{bmatrix}
        \pdv{g(x_t, \theta)}{x_t} & \pdv{g(x_t, \theta)}{\theta}
    \end{bmatrix}$ spans $\mathbb{R}^{d \times L}$ (i.e. g is transversal to \{0\}).
\end{assumption}

This assumption is quite mild in practice. As a concrete illustration, consider a linear network $f: \mathbb{R}^d \rightarrow \mathbb{R}^d$ defined as $f(x) = Ax + b$. Then $\pdv{f}{b} = I$, which is full-rank regardless of $A$ or $x$, so transversality applies trivially. In Appendix \ref{app:transversality} we prove that a similar argument on A extends this result to length-$L$ inputs whenever the tokens are linearly independent and $L \leq d$. This assumption only becomes easier to fulfill as models gain flexibility and parameter count; it only fails when the partial derivatives with respect to \emph{all} $p + dL$ variables conspire to lose rank at a fixed point, which is an extreme coincidence. 

Despite the weakness of this assumption, it helps us prove powerful results for autonomous networks. In particular:
\begin{proposition}[Restrictions on the Jacobian of $f_\theta$ at fixed points]\label{prop:fixed-jacobian}
    For almost all parameter vectors $\theta \in \mathbb{R}^p$, $J_{f_\theta}(x^*) = \pdv{f_\theta(x^*)}{x^*}$ contains no eigenvalues equal to 1 for any fixed points $x^*$.
\end{proposition}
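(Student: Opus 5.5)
This is the parametric transversality theorem (the Thom transversality theorem in its parametric form, often attributed to Sard–Smale / Abraham). Write $n = dL$ and view the hypothesis $g(x,\theta) = f_\theta(x) - x$ as a smooth map $g : \mathbb{R}^{n} \times \mathbb{R}^p \to \mathbb{R}^{n}$; it is smooth by the Smoothness Assumption. The proof has three steps.

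\textbf{Step 1: the zero set is a manifold.} By Assumption~\ref{assum:transversal}, the full derivative $\begin{bmatrix} \partial_x g & \partial_\theta g \end{bmatrix}$ is surjective at every point of $Z = g^{-1}(0)$. So $0$ is a regular value of $g$. By the regular value theorem, $Z$ is a smooth embedded submanifold of $\mathbb{R}^{n+p}$ of dimension $p$.

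\textbf{Step 2: reduce to Sard's theorem.} Consider the projection $\pi : Z \to \mathbb{R}^p$, $(x,\theta) \mapsto \theta$. The standard argument gives the following: for a point $(x,\theta) \in Z$, the map $d\pi$ is surjective there if and only if $\partial_x g(x,\theta)$ is surjective.

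To verify this, take any $v \in \mathbb{R}^p$. Surjectivity of the full derivative gives $(u,w)$ with $\partial_x g\, u + \partial_\theta g\, w = -\partial_\theta g\, v$. If $\partial_x g$ is onto, we can choose $u$ so that $(u,v)$ lies in $T Z = \ker Dg$, and then $v$ is hit by $d\pi$. Conversely, if $d\pi$ is onto, then for any target $y$ pick $(u,w)$ with $Dg(u,w) = y$. Choose $(u',w) \in TZ$ with the same $\theta$-component $w$. Then $\partial_x g\,(u - u') = y$, so $\partial_x g$ is onto.

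By Sard's theorem applied to the smooth map $\pi$ between $p$-dimensional manifolds (both second countable), the set of critical values of $\pi$ has Lebesgue measure zero in $\mathbb{R}^p$.

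\textbf{Step 3: conclude.} Fix any $\theta$ outside this null set, and let $x^*$ be any fixed point of $f_\theta$. Then $(x^*,\theta) \in Z$, and $\theta$ is a regular value of $\pi$, so $\partial_x g(x^*,\theta) = J_{f_\theta}(x^*) - I$ is surjective. Since it is a square $n \times n$ matrix, it is invertible. Hence $1$ is not an eigenvalue of $J_{f_\theta}(x^*)$. This holds for all fixed points simultaneously, since the exceptional $\theta$-set was defined globally.

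The main obstacle is the linear-algebra equivalence in Step 2 relating regular values of $\pi$ to surjectivity of $\partial_x g$. The rest is citation. A secondary point to handle carefully is that Sard requires $C^k$ smoothness with $k > \max(0, \dim Z - p) = 0$, which the Smoothness Assumption comfortably provides.
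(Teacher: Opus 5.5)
Your proposal is correct and follows the same route as the paper, which applies the parametric transversality theorem (Guillemin--Pollack) to $g(x,\theta) = f_\theta(x) - x$ and concludes that $J_{f_\theta}(x^*) - I$ is nonsingular at every fixed point for almost every $\theta$. The only difference is that you prove that cited theorem inline (regular value theorem for $Z = g^{-1}(0)$, Sard applied to the projection $\pi : Z \to \mathbb{R}^p$, and the equivalence between regular points of $\pi$ and surjectivity of $\partial_x g$); only the direction ``$d\pi$ onto $\Rightarrow$ $\partial_x g$ onto'' is needed, and the opening clause of your other direction, which produces $(u,w)$ from full surjectivity, is superfluous but harmless.
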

\begin{proof}
    Since g is smooth and transversal to \{0\} (Assumption \ref{assum:transversal}), we can apply the transversality theorem (\citep[p.~68]{guillemin_differential_2010}) to get that for almost all $\theta$, $g_\theta$ is transversal to 0. By definition of a transversal, we have that $J_{g_\theta}$ together with $T_0\{0\} = \{0\}$ span $\mathbb{R}^{d \times L}$ and thus $J_{g_\theta}$ is non-singular at $g^{-1}_\theta(0)$. However, \begin{equation}\label{eq:jacobianf}
        J_{g_\theta} = \pdv{g_\theta(x)}{x} = \pdv{f_\theta(x) - x}{x} = J_{f_\theta} - I 
    \end{equation}
    \eqref{eq:jacobianf} implies that $J_{f_\theta} = J_{g_\theta} + I$. Since $J_{g_\theta}$ is non-singular, $J_{f_\theta}$ has no eigenvalues of 1 at $g^{-1}_\theta(0)$ (the fixed points of $f_\theta$).
\end{proof}

With this, we can characterize the fixed-point structure of autonomous networks.

\begin{proposition}[Fixed points are a dimension-zero manifold]\label{prop:dim0}
    Let $f_\theta: \mathbb{R}^{d \times L} \rightarrow \mathbb{R}^{d \times L}$ be an autonomous looped network. Let $S = \{x^*: f_\theta(x^*) = x^*\}$ be the set of fixed points of $f_\theta$. Then, for almost all parameterizations $\theta$, \textit{S is a dimension-zero manifold}.  
\end{proposition}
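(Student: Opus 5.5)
The plan is to realize $S$ as a regular level set and read its dimension off the preimage theorem. As in Assumption~\ref{assum:transversal}, set $g_\theta(x) = f_\theta(x) - x$, a smooth map $\mathbb{R}^{d\times L}\to\mathbb{R}^{d\times L}$ (smoothness comes from the standing smoothness assumption). Then $S = g_\theta^{-1}(0)$ exactly. Since $g$ is transversal to $\{0\}$ as a map of $(x,\theta)$, the transversality theorem \citep[p.~68]{guillemin_differential_2010} gives a full-measure set $\Theta\subseteq\mathbb{R}^p$ such that $g_\theta \pitchfork \{0\}$ for every $\theta\in\Theta$. This is the same set already produced in the proof of Proposition~\ref{prop:fixed-jacobian}, so I will simply reuse it.

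Fix $\theta\in\Theta$. At each $x^*\in S$, transversality to the point $\{0\}$ means $dg_\theta(x^*) = J_{f_\theta}(x^*) - I$ is surjective. Equivalently, it is invertible, since it is a square map, and this matches the conclusion of Proposition~\ref{prop:fixed-jacobian} that $J_{f_\theta}(x^*)$ has no eigenvalue $1$. Hence $0$ is a regular value of $g_\theta$. By the preimage theorem, $S$ is then an embedded submanifold of $\mathbb{R}^{d\times L}$ of codimension equal to $\dim \mathbb{R}^{d\times L} = dL$, i.e.\ of dimension $dL - dL = 0$. The case $S=\emptyset$ is vacuously a $0$-manifold.

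To make this concrete and avoid hiding anything in the preimage theorem, I would also give the direct argument via the inverse function theorem. Since $dg_\theta(x^*)$ is invertible, $g_\theta$ restricts to a diffeomorphism from some open neighborhood $U\ni x^*$ onto a neighborhood of $0$. In particular $g_\theta$ is injective on $U$, so $U\cap S=\{x^*\}$. Thus every point of $S$ is isolated, and $S$ is a discrete subset of $\mathbb{R}^{d\times L}$, which is precisely a dimension-zero manifold. Since $\mathbb{R}^{d\times L}$ is second countable, $S$ is moreover countable; this is the countability claim stated in the abstract and follows for free.

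The only real obstacle is the measure-theoretic step of passing from transversality of the joint map $g(x,\theta)$ to transversality of $g_\theta$ for almost every $\theta$. This requires $g$ to be smooth on $\mathbb{R}^{d\times L}\times\mathbb{R}^p$ and transversal everywhere on $g^{-1}(0)$, which is exactly Assumption~\ref{assum:transversal} together with the smoothness assumption. I would therefore check carefully that the hypotheses of the transversality theorem, a boundaryless source manifold and a smooth parametrized family, hold here, and then cite it as in Proposition~\ref{prop:fixed-jacobian}. Everything after that point is the standard regular-value argument above.
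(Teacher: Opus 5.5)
Your proposal is correct and follows the paper's route: you get the full-measure set of $\theta$ from Assumption~\ref{assum:transversal} via the transversality theorem, as in Proposition~\ref{prop:fixed-jacobian}. You then note that $J_{f_\theta} - I$ is invertible on $S$, so $0$ is a regular value, and the preimage theorem gives dimension $dL - dL = 0$. Your extra inverse-function-theorem argument for isolatedness and your second-countability argument for countability are both sound, and they spell out what the paper's parenthetical ``(countable)'' leaves implicit.
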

\begin{proof}
    From Proposition \ref{prop:fixed-jacobian}, we know that for almost all $\theta$, $J_{g_\theta} = J_{f_\theta(x) - x}$ is non-singular at points $x^*$ where $g_\theta(x^*) = 0$. Thus, 0 is a regular value and thus $g^{-1}_\theta(0)$ is a manifold of dimension $dL - dL = 0$ \citep[p.~11]{Milnor1965}. Since  $g^{-1}_\theta(0)$ represents the set of all fixed points of $f_\theta$ the fixed points are a manifold of dimension 0 (countable).
\end{proof}

Proposition \ref{prop:dim0} bounds the number of fixed points an autonomous network can have: for almost all parameters, the set is countable, and finite when $x_\theta$ is confined to a compact set (as under post-norm; see Remark \ref{rem:post-norm}). This limits the input-dependence of an autonomous model to simple basin selection, where the network simply chooses \textit{which} of its predetermined isolated fixed points to output given an input. Recall networks, which condition each step on $x_0$, are not subject to this constraint. We note that this argument extends naturally to \emph{joint fixed points} — points where the output of a final readout head $w_\theta$ is unchanged by one application of $f_\theta$ even if the hidden state is not -- with an analogous transversality assumption; see Appendix \ref{app:joint-fixed} for details.

The previous proposition argues that autonomous networks have severely limited input-dependence in theory. The following proposition shows that even within this limited regime, the gradient dynamics at autonomous fixed points make it difficult for training to learn which basin to route each input to: every spectral regime either starves the model of input-gradient signal, prevents the fixed point from being reached, or destabilizes the parameter gradients.

\begin{proposition}[Gradient Instability of Autonomous Looped Networks]\label{prop:instability-autonomous} 
Let $f_\theta$ be an autonomous looped network. Suppose that for some $K < T$, the hidden state converges to a fixed point $x^* = f_\theta(x^*)$, and so $x_t = x^*$ for all t $\geq$ K. Then: 
\begin{equation}\label{eq:auto-grad}
    \pdv{x_T}{x_0} = \left(J_{f_\theta}(x^*)\right)^{T-K} A_{(x_0, \theta)}
\end{equation}
where $A_{(x_0, \theta)}$ is a matrix depending only on $\theta$ and $x_0$, and $J_{f_\theta} = \pdv{f_\theta(x)}{x}\bigg|_{x=x^*}$ is the Jacobian of $f_\theta$ at the fixed point. 
\end{proposition}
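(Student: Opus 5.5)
The identity \eqref{eq:auto-grad} follows from the chain rule applied along the unrolled trajectory, using the fixed-point hypothesis to freeze the Jacobian after step $K$. Since the network is autonomous, each step $x_{t+1} = f_\theta(x_t)$ depends on $x_0$ only through $x_t$; the exception is the initialization $x_1 = f_\theta(e, x_0)$ of Definition \ref{def:looped-network}, where $x_0$ enters directly. I will first write the total derivative as a product of one-step Jacobians,
\[
    \pdv{x_T}{x_0} = J_{f_\theta}(x_{T-1})\, J_{f_\theta}(x_{T-2}) \cdots J_{f_\theta}(x_{1})\, \pdv{x_1}{x_0},
\]
valid because $f_\theta$ is smooth (the standing smoothness assumption) and does not itself depend on $x_0$ for $t \ge 1$.

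Next, I split the product at index $K$. For every $t \ge K$ the hypothesis gives $x_t = x^*$, so each factor $J_{f_\theta}(x_t)$ with $K \le t \le T-1$ equals the single matrix $J_{f_\theta}(x^*)$. There are exactly $T-K$ such factors, and they collapse to $\left(J_{f_\theta}(x^*)\right)^{T-K}$. I then define
\[
    A_{(x_0,\theta)} \coloneqq J_{f_\theta}(x_{K-1}) \cdots J_{f_\theta}(x_1)\, \pdv{x_1}{x_0}.
\]
Each $x_t$ with $t < K$ is determined by $x_0$, $\theta$ and the fixed choice of $e$, so $A$ depends only on $(x_0, \theta)$, as claimed. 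It also does not depend on $T$, which is what makes the statement meaningful.

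The only delicate point is interpreting "depends on $x_0$" correctly, and I expect this to be the main obstacle. Literally $x_t \equiv x^*$ for $t \ge K$, yet $x^*$ may vary with $x_0$ through basin selection. The derivative must therefore be taken along the unrolled computation graph at the given $x_0$ (the backpropagated gradient), not by differentiating a locally constant map. I will state this convention explicitly.

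I also need $K$ to be chosen locally uniformly in $x_0$, or else I work purely with the recurrence-defined Jacobian product, so that the chain-rule expansion is legitimate. Under this convention the argument is purely algebraic, with no need for Proposition \ref{prop:fixed-jacobian} or Proposition \ref{prop:dim0}. Those results will instead matter afterwards, when interpreting the spectrum of $J_{f_\theta}(x^*)^{T-K}$: eigenvalues below $1$ kill the input gradient, eigenvalues above $1$ either prevent reachability or blow up gradients, and eigenvalues exactly $1$ are excluded for almost all $\theta$.
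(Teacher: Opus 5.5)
Your proposal is correct and follows the paper's proof. Both expand $\partial x_T/\partial x_0$ by the chain rule into a product of one-step Jacobians, split that product at $K$, collapse the $T-K$ post-convergence factors to $\left(J_{f_\theta}(x^*)\right)^{T-K}$, and take the pre-convergence product as $A_{(x_0,\theta)}$. The only cosmetic difference is that you handle the initialization $x_1 = f_\theta(e, x_0)$ explicitly through the factor $\partial x_1/\partial x_0$, whereas the paper writes the product from $t=0$ as if $x_1 = f_\theta(x_0)$.
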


\begin{proof}
    See Appendix \ref{app:grad-instability} for details.
\end{proof}

\begin{figure}[h!]
\centering
\begin{tikzpicture}

\begin{scope}
  \begin{scope}
    \clip (-2.5,-2.5) rectangle (2.5,2.5);
    \foreach \xi in {-2,-1.5,-1,-0.5,0.5,1,1.5,2} {
      \foreach \yi in {-2,-1.5,-1,-0.5,0.5,1,1.5,2} {
        \pgfmathsetmacro{\ddx}{-0.50*\xi*0.55}
        \pgfmathsetmacro{\ddy}{-0.35*\yi*0.55}
        \draw[->,thin,black!45] (\xi,\yi) -- ++(\ddx,\ddy);
      }
    }
  \end{scope}
  \draw[->] (-2.5,0) -- (2.5,0) node[right,font=\footnotesize]{$x$};
  \draw[->] (0,-2.5) -- (0,2.5) node[above,font=\footnotesize]{$y$};
  \filldraw (0,0) circle (2.5pt)
    node[above right,font=\footnotesize]{$x^*$};
  \node[font=\small] at (0,-3.1)
    {(a) $|\lambda_x|,|\lambda_y|<1$};
\end{scope}

\begin{scope}[xshift=6.5cm]
  \begin{scope}
    \clip (-2.5,-2.5) rectangle (2.5,2.5);
    \foreach \xi in {-2,-1.5,-1,-0.5,0.5,1,1.5,2} {
      \foreach \yi in {-2,-1.5,-1,-0.5,0.5,1,1.5,2} {
        \pgfmathsetmacro{\ddx}{-0.50*\xi*0.55}
        \pgfmathsetmacro{\ddy}{ 0.80*\yi*0.55}
        \draw[->,thin,black!45] (\xi,\yi) -- ++(\ddx,\ddy);
      }
    }
  \end{scope}
  \draw[->,red!75!black,very thick] (-2.5,0) -- (2.5,0)
    node[right,font=\footnotesize,black]{$x$};
  \draw[->] (0,-2.5) -- (0,2.5) node[above,font=\footnotesize]{$y$};
  \node[red!75!black,font=\footnotesize,above] at (1.55,0.05)
    {stable mfd.};
  \filldraw (0,0) circle (2.5pt)
    node[above right,font=\footnotesize]{$x^*$};
  \node[font=\small] at (0,-3.1)
    {(b) $|\lambda_x|<1<|\lambda_y|$};
\end{scope}

\end{tikzpicture}
\caption{Phase portraits for the map $(x,y)\mapsto(\lambda_x x,\,\lambda_y y)$, corresponding to a $2\times2$ matrix with eigenvectors along the coordinate axes. \\
\textbf{(a)} Both eigenvalues satisfy $|\lambda|<1$: every trajectory converges to $x^*$. This is \textit{input-gradient vanishing.} \\
\textbf{(b)} One eigenvalue exceeds~1 in magnitude: only the stable manifold
(the $x$-axis, shown in red) converges to $x^*$; \textit{all other trajectories diverge}.}
\label{fig:phase-portraits}
\end{figure}

This representation provides a stronger understanding of the dynamics of the gradient at a fixed point of $f_\theta$. We note that the behavior of $\pdv{x_T}{x_0}$ depends strongly on the value of $\rho(J_{f_\theta}(x^*))$, the spectral radius of the Jacobian of $f_\theta$ at the fixed point. As such, we split the dynamics of $\pdv{x_T}{x_0}$ into three parts depending on $\rho(J_{f_\theta}(x^*))$. We find that \textit{no} regime provides useful training signal: $\rho(J) < 1$ causes the input-gradient to vanish exponentially, $\rho(J) > 1$ fails reachability, and $\rho(J) = 1$ induces exploding parameter gradients. Appendix \ref{app:cases} provides a more detailed mathematical foundation for these findings. Figure \ref{fig:phase-portraits} illustrates the contrast between the cases where $\rho(J) < 1$ versus $\rho(J) > 1$: when all eigenvalues have modulus less than 1, every trajectory converges to $x^*$; when any eigenvalue exceeds 1 in magnitude, trajectories almost always escape the neighborhood, and the fixed point fails reachability.

Together with Proposition \ref{prop:dim0}, this paints a difficult picture for autonomous networks: their expressible outputs are countable, and reaching those fixed points trades off against the quality of the training signal. This helps explain the empirical observations of \citet{bansal_end--end_2022}, where autonomous networks fail to reach fixed points and substantially degrade under repeated iterations.

Our own autonomous models exhibit a milder failure mode. Unlike \citet{bansal_end--end_2022}, our best runs largely \textit{do not overthink} (Figure \ref{fig:autonomous}), because we use a larger progressive-loss weight $\alpha$ to force fixed-point convergence\footnote{We use $\alpha = 1$ throughout; \cite{bansal_end--end_2022} use $\alpha = 1$ on prefix sums (where neither ours nor theirs overthink), $\alpha = 0.5$ on chess, and do not test on sudoku.} But the bottleneck remains: avoiding overt degradation under extra iterations is easier than having reachable, input-dependent fixed points. Consistent with this, our autonomous models (Appendix \ref{app:full-results}, \say{none} rows) achieve non-trivial validation accuracy in some configurations, but still exhibit substantial val-to-hard degradation and never approach the hard accuracy of recall variants on the same task.

Notably, prefix-sums is the only problem on which autonomous networks achieve 0\% accuracy across all configurations. It is also the only problem whose input dimensionality changes between training and testing: the train distribution is 32-bit sequences and the test distribution is 512-bit. Since the set of fixed points depends on the state dimension, basin-selection learned at training dimension largely cannot transfer to test dimension. Sudoku and chess preserve input dimensionality across train and test, so basin-selection can at least partially transfer -- consistent with the non-trivial val and hard accuracies we observe there.

\begin{figure}
    \centering
    \textbf{Accuracy of Autonomous Networks}
    \includegraphics[width=\linewidth]{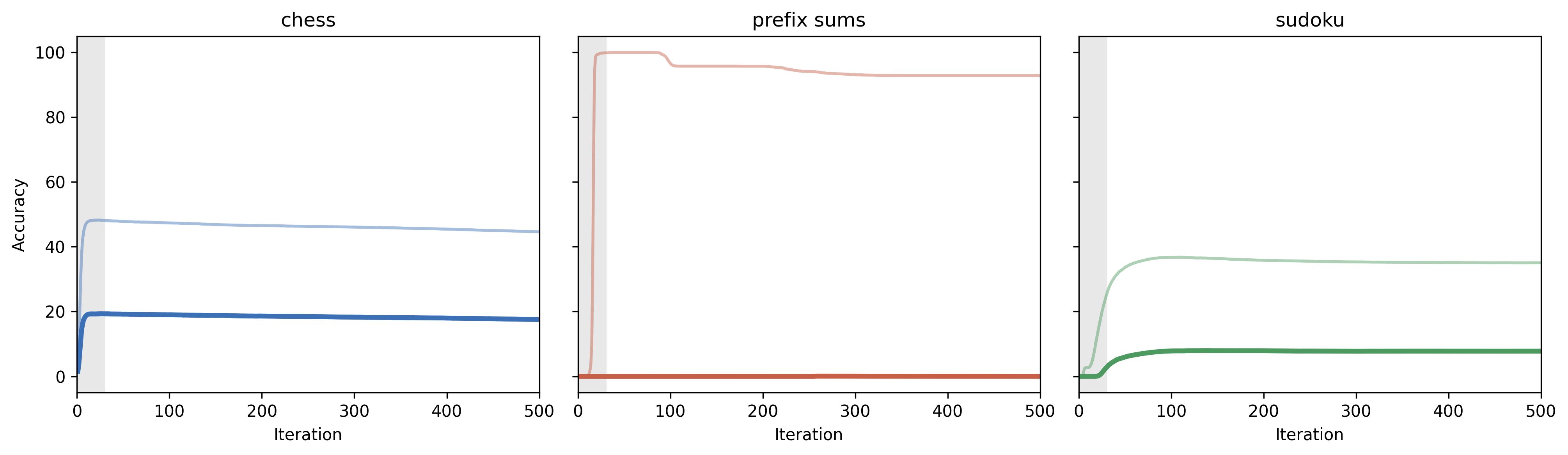}
    \caption{We consider the best (hard accuracy) autonomous norm + LR configuration across each task, and plot its performance as a function of iteration count. The gray zone on each plot represents the maximum loops used in training. Our $\alpha = 1$ progressive loss training largely prevents overthinking, with accuracy mostly conserved beyond the training iteration depth.}
    \label{fig:autonomous}
\end{figure}

Recall networks aim to fix this by providing additional paths from $x_T \rightarrow x_0$ by adding it as an input at every step. However, simply introducing recall is not sufficient for fixed-point stability -- the \textit{placement} of recall, as well as the presence of outer normalization, critically determines where on the axes of stability a given recall architecture lies. We analyze this in the following section.

\begin{figure}[t]
\centering
\begin{tikzpicture}[
    node distance=0.9cm and 0.9cm,
    >=latex,
    box/.style={draw, rounded corners, minimum width=1.5cm, minimum height=0.8cm, align=center, font=\small},
    smallbox/.style={draw, rounded corners, minimum width=1.5cm, minimum height=0.7cm, align=center, font=\small},
    fbox/.style={draw, rounded corners, minimum width=1.5cm, minimum height=0.5cm, align=center, font=\small, 
    fill=blue!25},
    gbox/.style={draw, rounded corners, minimum width=1.5cm, minimum height=0.5cm, align=center, font=\small, 
    fill=teal!50},
    phibox/.style={draw, dotted, rounded corners, minimum width=1.5cm, minimum height=0.5cm, align=center, font=\small},
    lbl/.style={font=\small}
]

\node[lbl] (title1) at (-4.8,3) {\textbf{External Recall}};
\node[lbl] (x1) at (-11,1.2) {$x_t$};
\node[gbox] (g1) at (-9, 1.2) {$g$};
\node[lbl] (x0) at (-11, 0.4) {$x_0$};
\node[circle, draw, inner sep=1pt] (plus1) at (-5.5, 1.2) {$+$};
\node[fbox] (h1) at (-7, 1.8) {$h_1$};
\draw[->] (x1) -- (g1);
\draw[->] (x0) -| (g1);
\draw[-] (g1) -- (plus1);
\draw[->] (g1) -| (h1);
\draw[-] (h1) -- (-5.5, 1.8) -- (plus1.north);
\node[phibox] (phi1) at (-4, 1.2) {$\phi_1$}; 
\draw[-] (plus1.east) -- (phi1.west);
\node[fbox] (h2) at (-2, 1.8) {$h_2$};
\node[circle, draw, inner sep=1pt] (plus2) at (-0.5, 1.2) {$+$};
\draw[-] (phi1) -- (plus2);
\draw[->] (phi1) -- (-2, 1.2) -- (h2.south);
\draw[-] (h2.east) -- (-0.5, 1.8) -- (plus2);
\node[phibox] (phi2) at (1, 1.2) {$\phi_2$}; 
\draw[-] (plus2) -- (phi2);
\draw[->] (phi2) -- (1, 2.25) -- (-11, 2.25) -- (x1);
\node[lbl] (xT) at (-4.8, 2.5) {$\times T$};

\node[lbl] (title2) at (-4.8,0) {\textbf{Internal Recall}};
\node[lbl] (xi1) at (-12,-1.3) {$x_t$};
\node[gbox] (gi1) at (-10, -2.1) {$g$};
\node[lbl] (xi0) at (-12, -2.1) {$x_0$};
\node[circle, draw, inner sep=1pt] (plusi1) at (-6.5, -1.3) {$+$};
\node[fbox] (hi1) at (-8, -2.1) {$h_1$};
\draw[->] (xi1) -- (plusi1);
\draw[->] (xi1) -- (-10, -1.3) -- (gi1);
\draw[->] (xi0) -- (gi1);
\draw[->] (gi1) -- (hi1);
\draw[-] (hi1) -- (-6.5, -2.1) -- (plusi1.south);
\node[phibox] (phii1) at (-5, -1.3) {$\phi_1$}; 
\draw[-] (plusi1.east) -- (phii1.west);
\node[lbl] (xT) at (-4.8, -0.5) {$\times T$};

\node[gbox] (gi2) at (-3, -2.1) {$g$};
\node[lbl] (xi01) at (-5, -2.1) {$x_0$};
\node[circle, draw, inner sep=1pt] (plusi2) at (0.5, -1.3) {$+$};
\node[fbox] (hi2) at (-1, -2.1) {$h_2$};
\draw[->] (phii1) -- (plusi2);
\draw[->] (phii1) -- (-3, -1.3) -- (gi2);
\draw[->] (xi01) -- (gi2);
\draw[->] (gi2) -- (hi2);
\draw[-] (hi2) -- (0.5, -2.1) -- (plusi2.south);
\node[phibox] (phii2) at (2, -1.3) {$\phi_2$}; 
\draw[-] (plusi2) -- (phii2);
\draw[->] (phii2) -- (2, -0.8) -| (xi1); 
\end{tikzpicture}
\caption{Comparison of two-layer autonomous, external recall, and internal recall architectures. In external recall, the recalled representation replaces the residual stream; in internal recall, the current state remains on the residual path and recall only affects the update.}
\label{fig:internal-vs-external}
\end{figure}
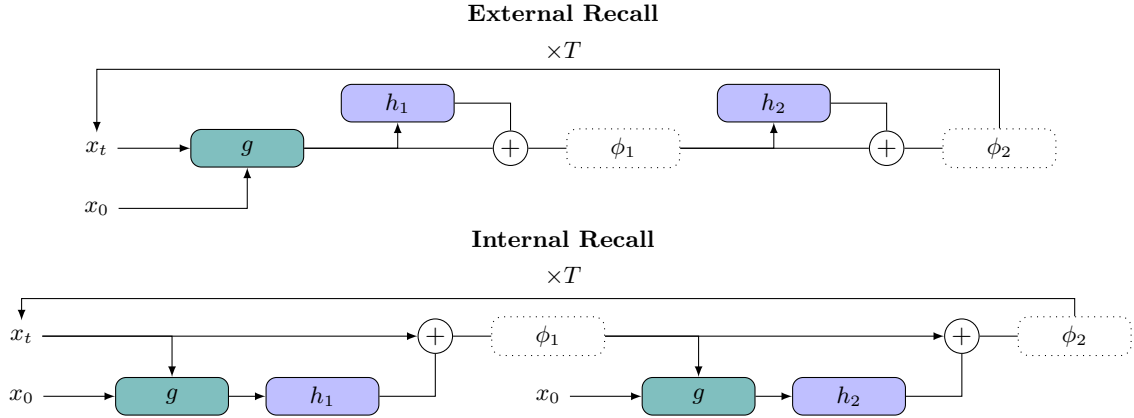

\section{Stability and Generalization of Recall Networks}\label{sec:recall}

Although the formal definition of recall is simple -- merely conditioning each iterate on the original input -- practitioners have considerable flexibility in exactly how this conditioning is applied. This flexibility is compounded in models with multiple sublayers, where a nontrivial additional question arises: \textit{where} in the network is the initial state injected? Below, we define two architecturally distinct recall variants, one standard in the literature and one novel, whose fixed-point structures differ in ways that directly determine stability and downstream performance. Our formal definitions assume a \textbf{two-sublayer} model for generality (e.g. Attention and MLP); Figure \ref{fig:internal-vs-external} gives a visual explanation of both variants under such a model.

For notational simplicity, we assume all functions in this section depend on parameters $\theta$, but do not directly write $\theta$ as an input. In addition, for both architectures, $\phi_i$ acts as an outer normalization function, and may be present or absent depending on architecture choice.

\begin{definition}[External Recall]
    An \textit{external} recall network is one such that for $t > 1$, \begin{align*}
        x_{t+1} &= \phi_2(z_t + h_2(z_t)) \\
        z_t &= \phi_1(g(x_t, x_0) + h_1(g(x_t, x_0)))
    \end{align*}
    
\end{definition}

External recall is the most common architectural choice in the literature, as in \cite{bansal_end--end_2022, geiping_scaling_2025, wang_hierarchical_2025, jolicoeur-martineau_less_2025}. Common choices for $g$ are pointwise addition \citep{wang_hierarchical_2025, jolicoeur-martineau_less_2025} or a linear combination with trainable matrices $W_1, W_2$ \citep{bansal_end--end_2022, geiping_scaling_2025}. We note that recall only enters in the \textit{first} sublayer, with all layer sublayers acting as pseudo-autonomous layers. 

We now introduce internal recall, a novel alternative architecture that exhibits qualitatively different optimization and fixed-point structures than that of external recall.

\begin{definition}[Internal Recall]
    An \textit{internal} recall network is one such that for $t > 1$, \begin{align*}
        x_{t+1} = \phi_2(z_t + h_2(g(z_t, x_0))) \\
        z_t = \phi_1(x_t + h_1(g(x_t, x_0)))
    \end{align*}
\end{definition}

In internal recall, the recall function $g$ never enters into the residual directly, only ever affecting the update to the residual. For a two-sublayer model, internal and external recall use the same parameter count and nearly identical compute (internal applies g once more per iteration, but g is generally inexpensive). This allows for controlled comparisons between the two architectures in the remainder of Section \ref{sec:recall}.

\subsection{Recall Without Outer Normalization}\label{sec:recall-no-outer}

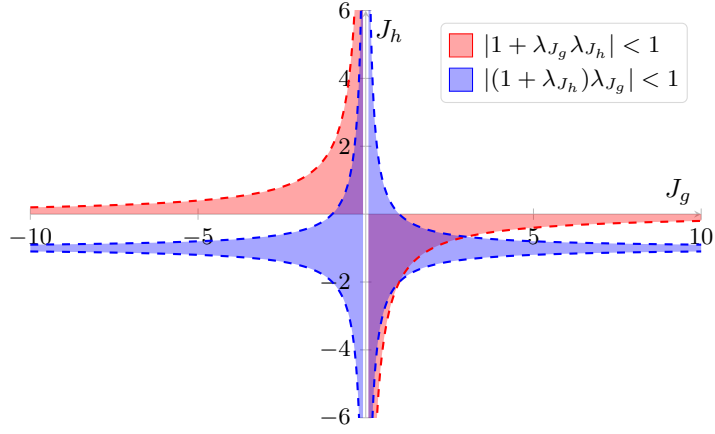
\begin{figure}[h!]
\centering
\begin{tikzpicture}
\begin{axis}[
    width=10.5cm,height=7cm,
    xmin=-10,xmax=10,
    ymin=-6,ymax=6,
    axis lines=middle,
    axis line style={gray!55, very thin},
    tick style={gray!55, very thin},
    ticklabel style={font=\small},
    xtick={-10,-5,0,5,10},
    xlabel={$J_g$}, 
    ylabel={$J_h$},
    ytick={-6,-4,-2,0,2,4,6},
    samples=400,
    clip=true,
    title={Stability Regions of Simple Model},
    title style={yshift=8pt}
]

\addplot[name path=rp,draw=none,domain=0.08:10] {-2/x};
\addplot[name path=rx,draw=none,domain=0.08:10] {0};
\addplot[red,fill=red,fill opacity=.35,draw=none] fill between[of=rp and rx];

\addplot[name path=rn,draw=none,domain=-10:-0.08] {-2/x};
\addplot[name path=rz,draw=none,domain=-10:-0.08] {0};
\addplot[red,fill=red,fill opacity=.35,draw=none] fill between[of=rn and rz];

\addplot[red,dashed,thick,domain=-10:-0.08] {-2/x};
\addplot[red,dashed,thick,domain=0.08:10] {-2/x};

\addplot[name path=bp1,draw=none,domain=0.08:10] {1/x-1};
\addplot[name path=bp2,draw=none,domain=0.08:10] {-1/x-1};
\addplot[blue,fill=blue,fill opacity=.35,draw=none] fill between[of=bp1 and bp2];

\addplot[name path=bn1,draw=none,domain=-10:-0.08] {1/x-1};
\addplot[name path=bn2,draw=none,domain=-10:-0.08] {-1/x-1};
\addplot[blue,fill=blue,fill opacity=.35,draw=none] fill between[of=bn1 and bn2];

\addplot[blue,dashed,thick,domain=-10:-0.08] {1/x-1};
\addplot[blue,dashed,thick,domain=0.08:10] {1/x-1};
\addplot[blue,dashed,thick,domain=-10:-0.08] {-1/x-1};
\addplot[blue,dashed,thick,domain=0.08:10] {-1/x-1};

\node[
    anchor=north east,
    draw=gray!30,
    fill=white,
    rounded corners=2pt,
    inner sep=3pt,
    font=\small
] at (rel axis cs:0.98,0.97) {%
\begin{tabular}{@{}c@{\hspace{4pt}}l@{}}
\raisebox{0.3ex}{\fcolorbox{red}{red!35}{\rule{0pt}{0.6ex}\rule{0.6ex}{0pt}}} &
$|1+\lambda_{J_g}\lambda_{J_h}|<1$ \\[2pt]
\raisebox{0.3ex}{\fcolorbox{blue}{blue!35}{\rule{0pt}{0.6ex}\rule{0.6ex}{0pt}}} &
$|(1+\lambda_{J_h})\lambda_{J_g}|<1$
\end{tabular}%
};

\end{axis}
\end{tikzpicture}
\caption{Stability regions for a simplistic (single-layer, equivalent eigenvectors) model of external (blue) and internal (red) recall models. In this figure, we shade the regions for which the two eigenvalues produce a stable fixed point.}
\label{fig:stability_regions}
\end{figure}

In Proposition \ref{prop:ext-int}, we discuss the stability of these models in the scenario where outer normalization is not applied; i.e. $\phi_i = I$.

\begin{proposition}[Reachability conditions for a recall model without outer normalization]\label{prop:ext-int}
    Let $x^*$ be a fixed point of a recall model for a fixed input $x_0$. \\
    \textbf{External Recall:} 
    Define \[
    M_{\text{ext}} = \left(I + \dv{h_2(z^*)}{z^*}\right) \left(I + \dv{h_1(g(x^*, x_0))}{g(x^*, x_0)} \right)\pdv{g(x^*, x_0)}{x^*}
    \]
    Then, $x^*$ is an attracting fixed point (and thus practically reachable) if and only if $\rho(M_{\text{ext}}) < 1$. \\
    \textbf{Internal Recall:} 
    Define \[
    M_{\text{int}} = \left(I + \dv{h_2(g(z^*, x_0))}{g(z^*, x_0)} \pdv{g(z^*, x_0)}{z^*}\right) \left(I + \dv{h_1(g(x^*, x_0))}{g(x^*, x_0)} \pdv{g(x^*, x_0)}{x^*}\right)
    \]Then, $x^*$ is an attracting fixed point if and only if $\rho(M_{\text{int}}) < 1$. 
\end{proposition}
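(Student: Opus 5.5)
With $x_0$ held fixed, I would treat each recall architecture as an autonomous map $F(x) = F_{x_0}(x)$ on $\mathbb{R}^{d\times L}$ and apply the standard linearization criterion for discrete dynamical systems. A fixed point $x^*$ of a $C^1$ map is locally asymptotically stable when the spectral radius of $DF(x^*)$ is less than $1$. So the proof has two parts: compute $DF(x^*)$ by the chain rule and show it equals $M_{\text{ext}}$ or $M_{\text{int}}$; then justify the ``if and only if'' in the sense the paper intends. Here ``attracting'' means attracting under the linearization, i.e.\ hyperbolic attraction.

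\textbf{External recall.} With $\phi_i = I$, the map is $F = A_2 \circ A_1 \circ G$, where $G(x) = g(x, x_0)$, $A_1(u) = u + h_1(u)$ and $A_2(z) = z + h_2(z)$. The chain rule, evaluated at $u^* = g(x^*, x_0)$ and $z^* = A_1(u^*)$, gives
\[
DF(x^*) = \left(I + \dv{h_2(z^*)}{z^*}\right)\left(I + \dv{h_1(u^*)}{u^*}\right)\pdv{g(x^*,x_0)}{x^*},
\]
which is exactly $M_{\text{ext}}$.

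\textbf{Internal recall.} Write $F = B_2 \circ B_1$ with $B_1(x) = x + h_1(g(x, x_0))$ and $B_2(z) = z + h_2(g(z, x_0))$. Because $x_0$ is fixed, the residual path contributes the identity, and each $B_i$ has derivative $I + Dh_i \, \partial_x g$. Composing at $z^* = B_1(x^*)$ reproduces $M_{\text{int}}$, with the $B_2$ factor on the left.

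\textbf{Stability criterion.} For the ``if'' direction, suppose $\rho(M) < 1$. Choose an induced norm with $\|M\| < 1$, which is possible because some operator norm lies within $\epsilon$ of the spectral radius. By smoothness, Taylor's theorem gives
\[
\|F(x) - x^*\| \le (\|M\| + \epsilon)\,\|x - x^*\|
\]
on a small ball around $x^*$. Hence $F$ is a contraction on that ball, and iterates converge geometrically; Banach's theorem makes this precise.

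For the converse, suppose $\rho(M) > 1$. Then there is an unstable eigendirection, and the unstable manifold theorem shows that only a measure-zero local stable manifold converges to $x^*$. This is the picture in Figure \ref{fig:phase-portraits}(b): $x^*$ is not attracting in any practical sense.

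\textbf{Main obstacle.} The delicate part is the converse when $\rho(M) = 1$. There linearization is inconclusive, and nonlinear terms can make $x^*$ attract slowly, for example sub-geometrically. I would handle this by reading ``attracting (practically reachable)'' as hyperbolically, i.e.\ exponentially, attracting. Under that reading, $\rho(M) = 1$ is excluded by definition, since convergence along an eigenvalue of modulus one is not exponential. Alternatively, one could argue as in Proposition \ref{prop:fixed-jacobian} that eigenvalues of modulus exactly one are non-generic. I would state this convention explicitly and otherwise keep the argument to the chain-rule computation plus the Hartman--Grobman / stable manifold dichotomy.
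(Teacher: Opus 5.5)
Your proposal is correct and follows the paper's proof: the chain rule through the two sublayers identifies $M_{\text{ext}}$ and $M_{\text{int}}$ as the Jacobian at $x^*$ of the update map (autonomous once $x_0$ is fixed), and the ``if and only if'' is the spectral-radius criterion the paper imports from its case analysis in Appendix \ref{app:cases}. Your hyperbolic-attraction convention for $\rho = 1$ treats the boundary case more honestly than the paper's Case 3, which only argues that this regime is hard to reach during training. Drop the genericity fallback, though: Proposition \ref{prop:fixed-jacobian} comes from invertibility of $J_f - I$, so it excludes only the eigenvalue $1$ itself, not $-1$ or complex eigenvalues on the unit circle.
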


\begin{proof}
See Appendix \ref{app:stability-no-outer}. 
\end{proof}

We note that while recall models do indeed add an additional path from $x_t \rightarrow x_0$, improving input-dependence, without outer normalization they are harder to stabilize than it may first appear. In particular, the \textit{geometry} of the reachable, input-dependent fixed-point regimes change drastically depending on whether the chosen architecture uses internal or external recall. Figure \ref{fig:stability_regions} visualizes this for a simplified single-layer model in which the eigenvectors of $J_h = \dv{h(g(x^*, x_0))}{g(x^*, x_0)}$ and $J_g = \pdv{g(x^*, x_0)}{x^*}$ align, so that the stability condition reduces to a simple multiplication of scalar values.

In the case where $\pdv{g(x^*, x_0)}{x^*}$ is strongly contractive, $M_{\textbf{ext}}$ allows substantial flexibility on the other terms while retaining stability. The same is not true for internal recall, whose identity shifts require a careful balancing of $\pdv{h_2(g(z^*, x_0))}{g(z^*, x_0)}$ and $\pdv{h_1(g(x^*, x_0))}{g(x^*, x_0)}$ to retain stability even under a contractive recall Jacobian. Appendix \ref{app:anisotropy} sharpens this further: a randomized projection experiment over the regions of Figure \ref{fig:stability_regions} shows that internal recall's stable region is substantially more anisotropic than external's, with median log-range at least 3$\times$ larger across all tested variances. Thus, while internal recall \textit{admits} a stable region, it occupies only a small, axis-hugging region of it, making it harder for training to land in a strong configuration even when the recall Jacobian is contractive. Appendix~\ref{app:lr-rho} shows that lower learning rates keep $\rho(\pdv{g(x^*, x_0)}{x^*})$ small throughout training, placing both architectures in this contractive-recall regime. Consistent with this asymmetry, external recall outperforms internal recall on both low and medium lr runs without outer normalization, on both validation and hard data (Tables~\ref{tab:val},~\ref{tab:hard})\footnote{The only exception is medium-lr pre-norm prefix-sums on val data.}.

By contrast, when $\pdv{g(x^*, x_0)}{x^*}$ is \textit{expansive}, the balance becomes more mixed. Figure \ref{fig:stability_regions} shows how, for $J_g$ large, external recall must actively \textit{shift} the eigenvalues of $J_h$ towards -1, whereas internal recall must merely keep its eigenvalues small and negative. The empirical results here are substantially more task-dependent than the contractive recall scenario; external fails to find any fixed points whatsoever for chess with pre-norm (seeing 0\% accuracy), but is able to find more usable fixed points than internal recall in sudoku with peri-norm. Overall, as $\rho(W_x)$ grows --which Appendix~\ref{app:lr-rho} shows is driven by learning rate -- both architectures move into the expansive-Jacobian regime predicted by Proposition \ref{prop:ext-int} to be unstable, and both degrade accordingly. In several configurations, this expansiveness eliminates usable fixed points entirely.

Taken together, while recall \textit{improves} input-dependence and thus generalization capacity -- both models beat autonomous models across all norm/lr pairs on hard data -- it is not \textit{sufficient} to ensure a favorable geometry, especially in higher-lr regimes where $\rho(W_x) > 1$\footnote{While we only discuss large $\rho(W_x)$ in the context of high learning rates, it can also occur under poor initializations of the $W_x$ matrix.}. In the next section, we consider outer normalization, which broadens the geometrically-stable regime and, in doing so, enables stronger generalization behavior for both internal and external recall models.

\subsection{Recall with Outer Normalization}\label{sec:recall-outer}
Section \ref{sec:recall-no-outer} showed that without outer normalization, the stable regime is narrow for both recall architectures and especially restrictive for internal recall. Outer normalization changes this picture in two ways. First, the outer normalization Jacobians appear as factors throughout the fixed-point Jacobian (see Appendix~\ref{app:explicit} for the explicit two-sublayer form), so a contractive $\phi$ shrinks the overall spectral radius and adds reachable, more geometrically stable fixed points. Second, if the chosen outer normalization bounds the state to a compact, convex set $K$ (as in post-norm; see Remark \ref{rem:post-norm}), the map $f_\theta: K \rightarrow K$ containing $\phi_\theta$ is a continuous self-map, so Brouwer's fixed-point theorem immediately guarantees the existence of a fixed-point in K (which can then vary locally with $x_0$, as we will see).

In addition to the qualities described above, we also require the fixed point to depend non-trivially on $x_0$ -- that is, not merely as basin selection among a discrete set (a key problem of autonomous networks as proven in Proposition \ref{prop:dim0}) but with a well-defined, nonzero local sensitivity $\pdv{x^*}{x_0}$. The following proposition shows that the same outer normalization that shrinks the fixed-point Jacobian in the forward pass simultaneously yields exactly this benefit.

\begin{proposition}\label{prop:recall-outer}
    Suppose we have a generic recall model $x_{t+1} = f(x_t, x_0)$ with fixed starting iterate $e$. Say that $x_T$ converges to $x^*$, a fixed point of the model, with $\rho\left(\pdv{f(x^*, x_0)}{x^*}\right) < 1$. Then, the input-gradient of the iterates converges to the input-gradient of the limiting fixed-point, a finite, non-zero value. Formally:\[
    \underset{T \rightarrow \infty}{\lim} \dv{x_T}{x_0} = \dv{}{x_0} \left(\underset{T \rightarrow \infty}{\lim} x_T\right) = \dv{x^*}{x_0} = \left(I - \pdv{f(x^*, x_0)}{x^*}\right)^{-1}\pdv{f(x^*, x_0)}{x_0}
    \]
\end{proposition}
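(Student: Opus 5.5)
The proof has three parts: an implicit function theorem argument for the fixed point, a linear recursion for the iterate sensitivities, and a convergence argument for that recursion.

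\textbf{Step 1: the fixed point as a smooth function of the input.} Write $A = \pdv{f(x^*, x_0)}{x^*}$ and $B = \pdv{f(x^*, x_0)}{x_0}$. Since $\rho(A) < 1$, the matrix $I - A$ is invertible. By the smoothness assumption and the implicit function theorem applied to $F(x, x_0) = f(x, x_0) - x$, there is a smooth map $x_0 \mapsto x^*(x_0)$ near $x_0$ with $x^*(x_0) = f(x^*(x_0), x_0)$. Differentiating this identity gives
\[
\dv{x^*}{x_0} = (I-A)^{-1}B .
\]
This identifies the right-hand side of the statement.

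\textbf{Step 2: a recursion for the iterate sensitivities.} Set $S_t = \dv{x_t}{x_0}$, $A_t = \pdv{f(x_t, x_0)}{x_t}$ and $B_t = \pdv{f(x_t, x_0)}{x_0}$. The chain rule gives
\[
S_{t+1} = A_t S_t + B_t .
\]
The starting iterate $e$ is fixed, so $S_1 = B_0$ is finite. Because $x_t \to x^*$ and $f$ is $C^1$, we have $A_t \to A$ and $B_t \to B$. The problem is thus a perturbed linear recursion that converges to the affine map $S \mapsto AS + B$, whose unique fixed point is $(I-A)^{-1}B$.

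\textbf{Step 3: convergence of the recursion (the main obstacle).} Since $\rho(A)<1$ but possibly $\|A\| \ge 1$, I first choose an induced norm $\|\cdot\|_*$ with $\|A\|_* \le q < 1$, which exists by the standard spectral-radius norm construction. By continuity, $\|A_t\|_* \le q' < 1$ for all $t \ge K$. Let $E_t = S_t - (I-A)^{-1}B$. Then
\[
E_{t+1} = A_t E_t + (A_t - A)(I-A)^{-1}B + (B_t - B),
\]
so $\|E_{t+1}\|_* \le q'\|E_t\|_* + \delta_t$ with $\delta_t \to 0$. A standard discrete Grönwall / Toeplitz-type lemma then gives $E_t \to 0$. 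The key observation is that the terms $\delta_s$ enter with geometrically decaying weights $q'^{\,t-s}$. This establishes $\lim_T \dv{x_T}{x_0} = \dv{x^*}{x_0}$.

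The middle equality in the statement also requires that $x^*(x_0')$ is indeed $\lim_T x_T(x_0')$ for nearby inputs $x_0'$, i.e.\ that the derivative of the limit is the derivative of the nearby fixed points. For this I use that $x^*(\cdot)$ remains attracting in a neighbourhood, since the spectral condition persists by continuity. Trajectories from $e$ that reach the basin at a finite time keep doing so under small perturbations of $x_0$, because finitely many iterations depend continuously on $x_0$. Hence $\lim_T x_T(x_0') = x^*(x_0')$ locally, and the uniform convergence from Step 3 justifies the interchange of limit and derivative.

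\textbf{Finiteness and non-vanishing.} The limit is finite because $(I-A)^{-1}$ exists. It is non-zero exactly when $B \neq 0$, which I would take as the generic case for a recall model whose input genuinely enters $f$ (in contrast to Proposition \ref{prop:instability-autonomous}, where $B \equiv 0$). I would state this explicitly as a condition in the proof, since finiteness follows from the hypotheses but non-zeroness does not.
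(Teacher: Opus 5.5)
Your argument is correct and follows the paper's outline: the implicit function theorem gives $\dv{x^*}{x_0}=(I-A)^{-1}B$, the sensitivity recursion $S_{t+1}=A_tS_t+B_t$ is then analysed, and an adapted norm with $\|A\|_*<1$ comes from the spectral-radius lemma. The convergence step, however, is organized differently.

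\textbf{How the paper proves convergence.} It unrolls the recursion into $\prod_i A_i+\sum_i(\prod_j A_j)B_i$. It splits at an index $N$ beyond which $\|A_n\|_*<\tfrac{1+\rho}{2}$ and $\|A_n-A\|_*,\|B_n-B\|_*<\alpha$, and discards the pre-$N$ part. It then compares the tail to the truncated Neumann series $\sum_i A^iB$, using a telescoping identity for $\prod_j A_j-A^i$ (its Lemma on telescoping products). This gives a bound $C\alpha$, and finally $\alpha\to 0$.

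\textbf{How yours differs.} Your error recursion $E_{t+1}=A_tE_t+(A_t-A)(I-A)^{-1}B+(B_t-B)$ reaches the same conclusion from one contraction inequality $\|E_{t+1}\|_*\le q'\|E_t\|_*+\delta_t$ plus a discrete Gr\"onwall step. It dispenses with the telescoping lemma and the double-sum estimates entirely. The paper's version is more explicit about where each error term goes, but considerably longer.

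\textbf{Two points the paper's proof does not address.}
\begin{itemize}
\item The middle equality needs $\lim_T x_T(x_0')=x^*(x_0')$ for $x_0'$ near $x_0$, and the paper never verifies this. Your basin-persistence argument supplies it. It is made rigorous by a small $\|\cdot\|_*$-ball around $x^*$ that $f(\cdot,x_0')$ maps contractively into itself, uniformly for $x_0'$ near $x_0$.
\item Non-vanishing does not follow from the hypotheses. Since $(I-A)^{-1}$ is invertible, the limit vanishes exactly when $\pdv{f(x^*,x_0)}{x_0}=0$, e.g.\ when $W_0=0$ in linear recall. Stating $B\neq 0$ as an explicit condition, as you do, is the right call.
\end{itemize}

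\textbf{Two small corrections.}
\begin{itemize}
\item Once the limit map agrees with $x^*(\cdot)$ on a neighbourhood, the middle equality is just your Step 1. No uniform convergence of derivatives is needed, and Step 3, which works only at the single input $x_0$, does not provide it anyway.
\item With your definition of $B_t$, the initial condition should be $S_1=\pdv{f(e,x_0)}{x_0}$ rather than $B_0$. This is harmless for the limit.
\end{itemize}
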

\begin{proof}
    See Appendix \ref{app:input-gradient} for proof details, and Appendix \ref{app:explicit} for explicit Jacobian forms for external and internal models with outer normalization.
\end{proof}

Together, Propositions \ref{prop:ext-int} and \ref{prop:recall-outer} identify a regime that does not exist for autonomous networks. When $\rho\left(\pdv{f(x^*, x_0)}{x^*}\right) < 1$, the limit of gradients $\dv{x_T}{x_0}$ converges to the gradient at the fixed point, a finite, nonzero value. Thus, fixed points are reachable and \textit{vary smoothly} with the input $x_0$, unlike autonomous models which merely choose which of a predetermined set of fixed points to output. The explicit forms in Appendix \ref{app:explicit} show that the outer-normalization Jacobians of $\phi_1$ and $\phi_2$ appear in $\pdv{f(x^*, x_0)}{x^*}$, so contractive outer normalization can shrink the spectral radius and keep the model within this stable regime -- a mechanism that unnormalized models lack a direct lever for. In addition, a further consequence of this regime is that the fixed point is \textit{locally independent} of $e$, the initial recall iterate (Definition \ref{def:recall}), as proven in Appendix \ref{app:independent-e}. This initialization-independence is both intuitively helpful ($e$ is problem-independent, so it should not affect the final result) and empirically beneficial: \citet{anil_path_2022} find that path-independence improves reasoning performance on a wide range of problems.

All of these benefits result in improved performance across all tasks as compared to un-normalized models and autonomous models. The best normalized model performs 12 and 2 percentage points better on hard data than the best un-normalized model on sudoku and chess, respectively; on prefix sums, normalized models are the \textit{only} models to achieve non-zero hard accuracy at the largest learning rate, confirming their stronger stability.

We note that although normalized models do generally outperform un-normalized models across tasks, the exact \textit{architectural} choices -- post vs gru norm, internal vs external recall -- that perform best differ. We believe this is due to two phenomena that the axes framework does not capture, both arising once models reach stability along all three axes. First, some outer normalization mechanisms (e.g. post-norm) can induce exponential token clustering in the forward pass \citep{karagodin2025normalizationattentiondynamics} independent of gradient stability; if the recall term fails to perturb individual tokens enough, post-norm models can see the representation collapse observed in \citet{geiping_scaling_2025}. Second, internal and external recall differ substantially in how they \textit{represent} their fixed points, even when both are stable. Internal recall admits a fixed point whenever $\phi_i(x^*) = x^*$ and $h_1(g(x^*, x_0)) = h_2(g(z^*, x_0)) = 0$; external recall additionally requires $\phi_1(g(x^*, x_0)) = x^*$, resulting in qualitatively different fixed-point structures across tasks.

All of this is to say that the axes of stability act far closer to \textit{necessary} conditions rather than sufficient; models without them should expect substantial performance degradation on either validation and/or hard data, but models with them do not necessarily succeed across all tasks. It remains to discover whether there exist any sufficient conditions for looped models to generalize to harder-than-seen problems across tasks. 

\section{Conclusion}
This paper introduced a framework for understanding when looped transformers learn generalizable, scalable algorithms rather than memorize training-depth solutions, organized around three axes of stability: reachability, input-dependence, and geometry. We analyzed three architectural choices -- the existence of recall, the placement of recall, and the presence of outer normalization -- and found that each affects a distinct subset of the axes in ways that predict downstream performance.

Our central theoretical result is that recall combined with outer normalization is the most reliable route to a regime in which fixed points are simultaneously reachable, input-dependent, and geometrically robust. Autonomous networks succeed on reachability but exhibit extremely weak input-dependence: their fixed points form a countable set, limiting them to basin selection (Propositions \ref{prop:fixed-jacobian} - \ref{prop:instability-autonomous}). Recall without outer normalization possesses input-dependent fixed points but suffers from a narrow, anisotropic stability geometry (Figure \ref{fig:stability_regions}, Proposition \ref{prop:ext-int}). Outer normalization broadens the stable regime and improves stability across all three axes. Our theoretical argument matches our empirical results: outer normalized recall models perform substantially better than non-outer-normalized recall models, which in turn perform substantially better than autonomous models.

In addition, we introduced internal recall as a novel architectural variant and showed that, while it suffers from poor geometric stability without outer normalization, it becomes competitive -- and on sudoku, substantially better -- once outer normalization is added. We argue this is due to an asymmetry in fixed-point representations, wherein internal models allow $x^*$ to host only the final answer, whereas external models require it also to represent its relationship with $x_0$.

This paper opens up multiple directions for future work. Our framework predicts which architectures \textit{can} reach the stable regime, but not which ones will be best for a given task; the relative ranking of internal and external models differ across tasks in ways our current theory does not resolve. More broadly, our empirical analysis uses small, single-layer transformers; whether our findings will naturally extend to much larger models is an object for future work.

\newpage
\appendix

\section{Proofs}
\paragraph{Notation.} Unless stated otherwise, throughout this appendix we treat the parameters $\theta$ fixed as a constant of the problem. We write $\dv{}{x_0}$ for total derivatives that consider all dependencies of $x_0$ through the unrolled recurrence. For derivatives of single-argument functions, we similarly use total derivatives: for example, $\dv{h(g(x_t, x_0))}{g(x_t, x_0)}$ denotes the derivative of $h$ with respect to its argument. We reserve $\partial$ for multi-argument functions: for example, $\pdv{g(x_t, x_0)}{x_0}$ denotes the partial derivative of $g$ with respect to the second slot, holding $x_t$ constant. 

This distinction matters because several proofs below involve chain rule computations where the same variable appears as both a direct argument of a function and an indirect dependency through earlier states. We specify notation here to provide an unambiguous meaning throughout algebra-heavy proofs.

\subsection{Transversality for Length-$L$ Input}\label{app:transversality}
\begin{proposition}
    Consider a network $f: \mathbb{R}^{d \times L} \times \mathbb{R}^{d \times d} \times \mathbb{R}^d \rightarrow \mathbb{R}^{d \times L}$, linear in its first argument, defined as $f(X, A, b) = AX + b\mathbf{1}^T$. Assume that $L \leq d$ and that the columns of $X$ are linearly independent (rank $L$). Then, $\pdv{f(X, A, b)}{A}$, a $dL \times d^2$ matrix, is full-rank.
\end{proposition}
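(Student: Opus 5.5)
Since $f$ is linear in $A$, the derivative $\pdv{f(X, A, b)}{A}$ does not depend on $A$ at all. It is simply the linear map $\mathcal{L}_X: \mathbb{R}^{d \times d} \to \mathbb{R}^{d \times L}$ given by $\mathcal{L}_X(\Delta A) = \Delta A\, X$. So the plan is to identify this map with an explicit matrix and show it has rank $dL$, i.e.\ that $\mathcal{L}_X$ is surjective. Because $dL \le d^2$ when $L \le d$, rank $dL$ is exactly what "full-rank" means for a $dL \times d^2$ matrix. No earlier result from the paper is needed; this is pure linear algebra.

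\textbf{Step 1 (vectorization).} Using the identity $\mathrm{vec}(\Delta A\, X) = (X^T \otimes I_d)\,\mathrm{vec}(\Delta A)$, the Jacobian is the $dL \times d^2$ matrix $X^T \otimes I_d$. Since the rank of a Kronecker product is the product of the ranks, we get $\mathrm{rank}(X^T \otimes I_d) = \mathrm{rank}(X)\cdot d = Ld$, which is full row rank.

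\textbf{Step 2 (direct surjectivity).} To avoid relying on the Kronecker rank fact, I would also give a constructive argument. Since $X$ has rank $L$, the $L \times L$ Gram matrix $X^T X$ is invertible. Then $X^+ = (X^T X)^{-1} X^T$ satisfies $X^+ X = I_L$. For any target $Y \in \mathbb{R}^{d \times L}$, set $\Delta A = Y X^+$. This gives $\Delta A\, X = Y$, so $\mathcal{L}_X$ is onto $\mathbb{R}^{d\times L}$, and its matrix has rank $dL$.

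\textbf{Main obstacle.} The only delicate point is bookkeeping: "full-rank" must be read as rank $\min(dL, d^2) = dL$, and this uses $L \le d$. The invertibility of $X^T X$ is exactly where linear independence of the tokens enters. I would close by noting the consequence for Assumption~\ref{assum:transversal}: the block $\pdv{g}{\theta}$ already spans $\mathbb{R}^{d\times L}$, since $\pdv{g}{A} = \pdv{f}{A}$ (the $-X$ term does not depend on $A$). Hence transversality holds at every fixed point with linearly independent tokens.
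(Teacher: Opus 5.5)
Your proposal is correct and follows essentially the same route as the paper. Your Step 2 is exactly the paper's argument: identify $\pdv{f}{A}$ with the map $T \mapsto TX$, then show surjectivity via $T = Y(X^TX)^{-1}X^T$. Your Step 1 ($\mathrm{rank}(X^T \otimes I_d) = dL$) is a valid extra check on the explicit $dL \times d^2$ matrix, but it is not needed.
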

\begin{proof}    
    Fix $X$ and $b$. We note that $f(X, A + T, b) = AX + TX + b\mathbf{1}^T$. Thus, \begin{align*}
        \underset{T \rightarrow 0}{\lim} \frac{f(X, A + T, b) - f(X, A, b) - TX}{\norm{T}} &= \\
        \underset{T \rightarrow 0}{\lim} \frac{AX + TX + b\mathbf{1}^T - AX - b\mathbf{1}^T - TX}{\norm{T}} &= \\
        0
    \end{align*}
    And thus $Df(T) = TX$ is the partial derivative of $f$ with respect to A (though, it does not \textit{depend} on A). Rather than directly analyze the $dL \times d^2$ matrix, we instead show this map $Df: \mathbb{R}^{d \times d} \rightarrow \mathbb{R}^{d\times L}$ is surjective. Consider arbitrary $Y \in \mathbb{R}^{d \times L}$. Since $X$ has full column rank, $X^TX$ is invertible. Then, let $T$ = $Y(X^TX)^{-1}X^T \in \mathbb{R}^{d \times d}$ so that $Df(T) = Y(X^TX)^{-1}X^TX = Y$. Since Y was arbitrary, we have that the partial derivative is surjective, and thus $\pdv{f(X, A, b)}{A}$ is full-rank regardless of the starting matrix $A$. Thus, the transversality assumption holds at fixed points $X$ with linearly independent columns and $L \leq d$.
\end{proof}

\subsection{Joint Fixed Points in Autonomous Networks}\label{app:joint-fixed}
Proposition \ref{prop:dim0} proves that for almost all parameterizations $\theta$, the set of fixed points of $f_\theta$ is a dimension-zero manifold (finite if $x_t$ is compact). However, many looped models include a final \say{head} layer after concluding the looped layers, which may map multiple distinct values to the same value. For example, applying a normalization after the looped layers maps all $\lambda x_t$ to the same point. Thus, it is possible for there to exist points $x_t$ wherein $f_\theta(x_t) \neq x_t$, but $w_\theta(f_\theta(x_t)) = w_\theta(x_t)$ for some head $w_\theta: \mathbb{R}^{d \times L} \rightarrow \mathbb{R}^{d \times L}$. We call these points \say{joint fixed points}. As before we define $h_\theta = w_\theta(f_\theta(x_t)) - w_\theta(x_t)$ such that $h_\theta(x_t) = 0 \leftrightarrow x_t$ is a joint fixed point. If we augment Assumption \ref{assum:transversal} so that h is transversal to 0 instead, we get that $h_\theta^{-1}(0)$ (the set of all joint fixed points) is a dimension-zero set (finite if $x_t$ is bounded). This proves that, while head layers may be useful in some formulations of looped models, for autonomous models, they do not change the dimension of the set of fixed points prior to applying the loss.

\subsection{Gradient Instability for Autonomous Looped Networks}\label{app:grad-instability}

\begin{proof}Since the network is autonomous, each iterate satisfies $x_{t+1} = f_\theta(x_t)$. By the chain rule, \[
\dv{x_T}{x_0} = \prod_{t=0}^{T-1} J_{f_\theta}(x_t) = \underbrace{\prod_{t=K}^{T-1} J_{f_\theta}(x_t)}_{\text{post-convergence}} \; \underbrace{\prod_{t=0}^{K-1} J_{f_\theta}(x_t)}_{\text{pre-convergence}}\]
For all $t \geq K$ we have that $x_t = x^*$, so the post-convergence term simplifies: 
\begin{equation}\label{eq:exp}
    \prod_{t=K}^{T-1} J_{f_\theta}(x_t) = (J_{f_\theta}(x^*))^{T-K}
\end{equation}
\end{proof}

\subsection{Stability Cases for Autonomous Looped 
Networks}\label{app:cases}

Proposition \ref{prop:instability-autonomous} finds that the behavior of $\dv{x_T}{x_0}$ depends strongly on the spectral radius of $J_{f_\theta} = \pdv{f_\theta(x^*)}{x^*}$. Here, we split the spectral radius into three cases and analyze the regime of each.

\begin{enumerate}
    \item \textbf{Case 1: $\bm{\rho(J_{f_\theta}(x^*)) < 1}$}. Gelfand's formula implies that $\norm{J_{f_\theta}(x^*)^{T-K}}$ decays exponentially with $T$. In particular, $\forall \alpha \in (\rho(J_{f_\theta}(x^*)), 1), \exists C_\alpha \in \mathbb{R}^+$ with $\norm{J_{f_\theta}(x^*)^{T-K}} \leq C_\alpha \alpha^{T-K}$. Thus, we have that \[\norm{\dv{x_T}{x_0}} \leq \norm{J_{f_\theta}(x^*)}\norm{A_{(x_0, \theta)}} \leq C_{\alpha, x_0, \theta} \alpha^{T-K}\] which decays exponentially with $T$. This is \textit{input-independence}: as $T \to \infty$, $\left\|\dv{x_T}{x_0}\right\| \to 0$ exponentially, meaning $x_T$ becomes asymptotically insensitive to the initial state $x_0$. Since $x_0$ is the only information provided to the model $f_\theta$, this means the model essentially \say{forgets} the information it was given.
    \item \textbf{Case 2: $\bm{\rho(J_{f_\theta}(x^*)) > 1}$}. This implies that there is at least one eigenvalue with modulus greater than 1. In this scenario, the center-stable manifold theorem (\citep[p.65]{Shub2010-xz}) states the local center-stable manifold around $x^*$ has dimension $< dL$, and is thus measure zero. Thus among points in a sufficiently small neighborhood of $x^*$, almost every point eventually leaves the neighborhood, with at least one direction repelling away from $x^*$. Figure \ref{fig:phase-portraits} provides two example systems, one with all $|\lambda| < 1$ and one with an eigenvalue greater than 1; in the latter case, any starting point except the $y = 0$ line is repelled away from the fixed point. Unlike the $\rho(J_{f_\theta}) < 1$ scenario, wherein fixed points are easily reached but fail to depend on the input, fixed points in this case are practically never reached at all.
    \item \textbf{Case 3: $\bm{\rho(J_{f_\theta}(x^*)) = 1}$}. This case provides very little useful information, given that $x^*$ is a non-hyperbolic fixed point, of which much less can be derived. Rather than analyzing this case directly, we instead consider the requirements for this case to appear in the first place. In particular, we consider what occurs as the Jacobian approaches 1 from a reachable regime; i.e. as $\rho(J_{f_\theta}(x^*)) \rightarrow 1^-$. Under this scenario, we use the fact that $\rho(J_{f_\theta}(x^*)) < 1$ along with equation \eqref{eq:jacobianf} from Proposition \ref{prop:fixed-jacobian} to note that $\pdv{g(x^*, \theta)}{x^*}$ is nonsingular for $g(x^*, \theta) = f_\theta(x^*) - x^*$. Thus the implicit function theorem applies and we can write $x^*$ as a smooth function of $\theta$ nearby the zero point. \\
    Note that our gradient notation shifts at this point: here we treat $\theta$ as the key variable and let $\dv{g(x^*(\theta), \theta)}{\theta}$ represent the total derivative with respect to $\theta$. We can then write that the total gradient $\dv{g(x^*(\theta), \theta)}{\theta}$ = 0 at the zero point (due to the existence of a neighborhood with $g(x^*(\theta), \theta) = 0$), so we use the chain rule to get \begin{align*}
        \pdv{g(x^*, \theta)}{x^*} \dv{x^*(\theta)}{\theta} + \pdv{g(x^*, \theta)}{\theta} = 0 &\rightarrow \\
        \dv{x^*(\theta)}{\theta} = -\left(\pdv{g(x^*, \theta)}{x^*}\right)^{-1}\pdv{g(x^*, \theta)}{\theta} &\rightarrow \\
        \dv{x^*(\theta)}{\theta} = -\left(\pdv{f_\theta(x^*)}{x^*} - I\right)^{-1}\pdv{f_\theta(x^*)}{\theta} 
    \end{align*}
    
    Thus, as $\rho(J_{f_\theta}(x^*)) \rightarrow 1^-$, the smallest eigenvalue of $\pdv{f_\theta(x^*)}{x^*} - I \rightarrow 0$ and thus $\dv{x^*(\theta)}{\theta}$ explodes in magnitude (as long as it is not zeroed by the right-multiplication of $\pdv{f_\theta(x^*)}{\theta}$). Therefore the \textit{parameter} gradient becomes arbitrarily unstable as $\rho(J_{f_\theta}(x^*)) \rightarrow 1^-$, making it much harder for the network to reach (or stay at, given floating-point perturbations) the $\rho(J_{f_\theta}(x^*)) = 1$ regime in this case. \\
\end{enumerate}

\subsection{Stability Regimes of Recall Models without Outer Normalization}\label{app:stability-no-outer}

\begin{proof}
    For external recall, we have that \begin{align*}
        \dv{x_{t+1}}{x_t} &= \\
        \dv{x_{t+1}}{z_t} \dv{z_t}{x_t} &= \\
        \left(I + \dv{h_2(z_t)}{z_t}\right) \left(\pdv{g(x_t, x_0)}{x_t} + \dv{h_1(g(x_t, x_0))}{g(x_t, x_0)}\pdv{g(x_t, x_0)}{x_t} \right) &= \\
        \left(I + \dv{h_2(z_t)}{z_t}\right) \left(I + \dv{h_1(g(x_t, x_0))}{g(x_t, x_0)} \right)\pdv{g(x_t, x_0)}{x_t}
    \end{align*}
    Once a fixed-point is reached, this becomes
    \[
    M_{\text{ext}} = \left(I + \dv{h_2(z^*)}{z^*}\right) \left(I + \dv{h_1(g(x^*, x_0))}{g(x^*, x_0)} \right)\pdv{g(x^*, x_0)}{x^*}
    \]
    As explained in Section \ref{sec:autonomous} and Appendix \ref{app:cases}, fixed points are only locally stable (and thus practically reachable, given floating-point perturbations) if $\rho\left(\pdv{f_\theta(x^*, x_0)}{x^*}\right) < 1$. Thus, the fixed point is only locally stable if $\rho(M_{\text{ext}}) < 1$.

    For internal recall, we have that \begin{align*}
        \dv{x_{t+1}}{x_t} &= \\
        \dv{x_{t+1}}{z_t} \dv{z_t}{x_t} &= \\
        \left(I + \dv{h_2(g(z_t, x_0))}{g(z_t, x_0)} \pdv{g(z_t, x_0)}{z_t} \right) \left(I + \dv{h_1(g(x_t, x_0))}{g(x_t, x_0)} \pdv{g(x_t, x_0)}{x_t} \right)
    \end{align*}
    Once a fixed point is reached, this becomes \[
    M_{\text{int}} = \left(I + \dv{h_2(g(z^*, x_0))}{g(z^*, x_0)} \pdv{g(z^*, x_0)}{z^*} \right) \left(I + \dv{h_1(g(x^*, x_0))}{g(x^*, x_0)} \pdv{g(x^*, x_0)}{x^*} \right)
    \] 
    
    The equivalent argument as in external recall requires that $\rho(M_{\text{int}}) < 1$ for internal recall models to be locally stable.
\end{proof}

\subsection{Stability of Recall Models with Outer Normalization}\label{app:input-gradient}
We prove this in two steps.\\
    
\textbf{Step 1:} $\dv{x^*}{x_0} = \left(I - \pdv{f(x^*, x_0)}{x^*}\right)^{-1}\pdv{f(x^*, x_0)}{x_0}$\\

To begin with this proof, we must first show that the left term in this equation is well-defined and differentiable in the first place. We define the function $F(x_t, x_0) = f(x_t, x_0) - x_t$ so that fixed points of $f$ are zeros of $F$. At a fixed point $x^*$, we have (by assumption) that $\rho\left(\pdv{f(x^*, x_0)}{x^*}\right) < 1$. Then, \[
\rho\left(\pdv{F(x^*, x_0)}{x^*}\right) = \rho\left(\pdv{f(x^*, x_0)}{x^*}\right) - I
\] is invertible, so the implicit function theorem applies. Thus, there exists a differentiable function $x^*(x_0)$ in a neighborhood of $x_0$ such that $F(x^*(x_0), x_0) = 0$ for all points on that neighborhood. Further, we have that \[
\dv{x^*}{x_0} = -\left(\pdv{F(x^*(x_0), x_0)}{x^*}\right)^{-1}\pdv{F(x^*(x_0), x_0)}{x_0}
\] and \[
\dv{x^*}{x_0} = \left(I - \pdv{f(x^*(x_0), x_0)}{x^*}\right)^{-1}\pdv{f(x^*(x_0), x_0)}{x_0}
\]

\textbf{Step 2:} $\underset{T \rightarrow \infty}{\lim} \dv{x_T}{x_0} = \left(I - \pdv{f(x^*(x_0), x_0)}{x^*}\right)^{-1}\pdv{f(x^*(x_0), x_0)}{x_0}$ \\

From our assumption, we have that $\rho\left(\pdv{f(x^*, x_0)}{x^*}\right) < 1$. While useful for the prior step, here we instead need to bound the \textit{norm} of the Jacobian. \citet{Horn_Johnson_1985} Lemma 5.6.10 tells us that for any $\epsilon > 0$, matrix A, there exists a norm $*$ such that $\norm{A}_* \leq \rho(A) + \epsilon$. First defining $\rho = \rho\left(\pdv{f(x^*, x_0)}{x^*}\right)$ for notational simplicity, we choose $\epsilon = \frac{1 - \rho}{4}$, so that $\norm{\pdv{f(x^*, x_0)}{x^*}}_* < \frac{1 + 3\rho}{4}$.

We continue with two stated facts: first, that $x_T \rightarrow x^*$, and second, that $f(x_t, x_0)$ is smooth in both of its arguments. Thus, $\pdv{f(x_t, x_0)}{x_t}$ is continuous in $x^*$, and since norms are a continuous function of a matrix, $\norm{\pdv{f(x_t, x_0)}{x_t}}_*$ is also continuous in $x_t$. Applying this to the fixed point $x^*$, we have that for any $\epsilon > 0, \exists \delta > 0 \; s.t. \; \norm{x^\prime - x^*} < \delta \implies \bigg| \norm{\pdv{f(x^\prime, x_0)}{x^\prime}}_* - \norm{\pdv{f(x^*, x_0)}{x^*}}_*\bigg| < \epsilon$. We choose $\epsilon = \frac{1 - \rho}{4}$ so that $\norm{\pdv{f(x^\prime, x_0)}{x^\prime}}_* < \frac{1 + 3\rho}{4} + \frac{1 - \rho}{4} = \frac{1 + \rho}{2}$.

Similarly, since $x_T$ converges to $x^*$, we know that for $\epsilon > 0, \exists N s.t. \forall n > N, \norm{x_n - x^*} < \epsilon$. We choose $\epsilon$ as $\delta$ found earlier, and get that $\exists N \; s.t. \; \forall n > N, \norm{\pdv{f(x_n, x_0)}{x_n}}_* < \frac{1 + \rho}{2}$. We can follow the same argument for $\norm{\pdv{f(x_n, x_0)}{x_0} - \pdv{f(x^*, x_0)}{x_0}}_* < \epsilon$, and choose N large enough so that for small, prechosen $\alpha$, we have all three conditions fulfilled: \begin{enumerate}
    \item $\norm{\pdv{f(x_n, x_0)}{x_n}}_* < \frac{1 + \rho}{2}$
    \item $\norm{\pdv{f(x_n, x_0)}{x_0} - \pdv{f(x^*, x_0)}{x_0}}_* < \alpha$
    \item $\norm{\pdv{f(x_n, x_0)}{x_n} - \pdv{f(x^*, x_0)}{x^*}}_* < \alpha$
\end{enumerate}

With this in mind, we consider the recurrence $\dv{x_t}{x_0} = \pdv{x_t}{x_{t-1}} \dv{x_{t-1}}{x_0} + \pdv{x_t}{x_0}$, with $x_t$ both a direct function of $x_0$ (recall) and an indirect one through $x_{t-1}$. Unrolling this to zero, we get: \[
\dv{x_t}{x_0} = \prod_{i=1}^t \pdv{x_i}{x_{i-1}} + \sum_{i=1}^t \left(\prod_{j=i+1}^t \pdv{x_j}{x_{j-1}}\right) \pdv{x_i}{x_0}
\]
Since we care about the result as $t \rightarrow \infty$, we consider what occurs as $t > N$. We split up the sum into before and after N: \begin{align*}
    \prod_{i=1}^T \pdv{x_i}{x_{i-1}} + \sum_{i=1}^T \left( \prod_{j = i + 1}^T \pdv{x_j}{x_{j-1}}  \right) \pdv{x_i}{x_0} &= \\
    \left(\prod_{i=1}^N \pdv{x_i}{x_{i-1}}\right) \prod_{i=N+1}^T \pdv{x_i}{x_{i-1}} + \sum_{i=1}^N \left(\prod_{j=i}^N \pdv{x_j}{x_{j-1}} \right)
    \left(\prod_{j=N+1}^T \pdv{x_j}{x_{j-1}} \right) \pdv{x_i}{x_0} + \sum_{i=N+1}^T \left(\prod_{j=i+1}^T \pdv{x_j}{x_{j-1}}\right) \pdv{x_i}{x_0}
\end{align*}
And since $\norm{\pdv{x_i}{x_{i=1}}}_* = \norm{\frac{f(x_i, x_0)}{x_i}}_* < \frac{1 + \rho}{2}$ as long as $i > N$, the first two terms go to 0 as $T \rightarrow \infty$. We are left with only the final term. 

We now wish to prove that \[
\underset{T \rightarrow \infty}{\lim} \sum_{i=N+1}^T \left(\prod_{j=i+1}^T \pdv{x_j}{x_{j-1}}\right) \pdv{x_i}{x_0} = \left(I - \pdv{f(x^*, x_0)}{x^*}\right)^{-1} \pdv{f(x^*, x_0)}{x_0}
\]

We proceed by first naming matrices for ease of reading. $A_t = \pdv{x_t}{x_{t-1}}, B_t = \pdv{x_t}{x_0}$, and $A_* = \pdv{f(x^*, x_0)}{x^*}$ with $B_* = \pdv{f(x^*, x_0)}{x_0}$. We now reindex the sum backwards $i \rightarrow T - i$ and substitute these names to get:
\[
S_T = \sum_{i=0}^{T-N-1} \left(\prod_{j=T - i + 1}^T A_j\right) B_{T-i}
\]
We compare this sum with the sum of the fixed-point matrices, namely \[
S^*_T = \sum_{i=0}^{T-N-1} A_*^i B_*
\]
Note that as $T \rightarrow \infty$, as long as $\norm{A_*} < 1$ for some norm (which we previously proved for norm $*$), $S^*_T$ converges to $\left(I - \pdv{f(x^*, x_0)}{x^*}\right)^{-1} \pdv{f(x^*, x_0)}{x_0}$.

We wish to prove that $\lim_{T \rightarrow \infty} \norm{S_T^* - S_T}_* = 0$; i.e. the Jacobians are the same in the limit. By the triangle inequality, we have that \[
\norm{S^*_T - S_T}_* \leq \sum_{i=0}^{T-N-1} \norm{\left(\prod_{j=T - i + 1}^T A_j\right) B_{T-i} - A_*^i B_*}_*
\]
We can add and subtract $A_*^iB_{T-i}$ and simplify to get:
\begin{align*}
     \sum_{i=0}^{T-N-1} \norm{\left(\prod_{j=T - i + 1}^T A_j\right) B_{T-i} - A_*^i B_*}_* &= \\
      \sum_{i=0}^{T-N-1} \norm{\left(\prod_{j=T - i + 1}^T A_j\right) B_{T-i} - A_*^iB_{T-i} + A_*^iB_{T-i} - A_*^i B_*}_* &= \\
      \sum_{i=0}^{T-N-1} \norm{\left(\left(\prod_{j=T - i + 1}^T A_j\right) - A_*^i\right) B_{T-i} + A_*^i(B_{T-i} - B_*)}_* &\leq \\ 
      \sum_{i=0}^{T-N-1} \left(\norm{\left(\left(\prod_{j=T - i + 1}^T A_j\right) - A_*^i\right)}_*\norm{B_{T-i}}_* + \norm{A_*}_*^i\norm{B_{T-i} - B_*}_*\right)
\end{align*}
We now bound each of the individual norms. First, note that i goes from 0 to $T - N - 1$, so $T - i > N$ across the entire sum. Thus, $\norm{B_{T-i} - B_*}_* < \alpha$ as described earlier. Also, we have that $\norm{A_*}_* < \frac{1 + \rho}{2} = \frac{1 + \rho}{2}$ (renaming), so that $\norm{A_*}^i_* < (\frac{1 + \rho}{2})^i$. By the triangle inequality, we have that \[
\norm{B_{T-i}}_* = \norm{B_{T - i} - B_* + B_*}_* \leq \norm{B_{T-i} - B_*}_* + \norm{B_*}_* < \alpha + \norm{B_*}_*
\] 
For simplicity, we notate $M = \alpha + \norm{B_*}_*$, a constant regardless of T or i. Thus, the only term left to bound is the very first term. To do so, we prove a short lemma by induction:

\begin{lemma}\label{lem:telescope}
    For matrices $(A_n)_{n=1}^K, B$ that can be multiplied, \[
    \prod_{i=1}^K A_i - B^K = \sum_{i=1}^K \left(\prod_{j=1}^{i-1} A_j\right) (A_i - B)B^{K - i}
    \]
\end{lemma}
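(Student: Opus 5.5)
We argue by induction on $K$. The identity is a standard telescoping statement: the left side is the error incurred by replacing every $A_i$ with $B$, and the right side replaces them one at a time, from the right end of the product. Throughout we use the conventions that an empty product $\prod_{j=1}^{0} A_j$ is the identity and that $B^0 = I$. All matrices are square of the same size, as in the application where $A_j$ and $A_*$ are Jacobians of $f$ in $x_t$.

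\textbf{Base case} $K=1$. The left side is $A_1 - B$. The right side has the single term $\left(\prod_{j=1}^{0} A_j\right)(A_1 - B)B^{0} = A_1 - B$, so the two sides agree.

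\textbf{Inductive step.} Assume the identity holds for $K$. We add and subtract $\left(\prod_{i=1}^K A_i\right) B$ to get
\[
\prod_{i=1}^{K+1} A_i - B^{K+1} = \left(\prod_{i=1}^{K} A_i\right)(A_{K+1} - B) + \left(\prod_{i=1}^K A_i - B^K\right)B .
\]
The first summand is exactly the $i = K+1$ term of the target sum, since $B^{(K+1)-(K+1)} = I$. For the second summand we apply the inductive hypothesis and right-multiply by $B$. Each exponent $B^{K-i}$ becomes $B^{K+1-i}$, which reproduces the terms $i = 1,\dots,K$ of the target sum. Adding the two pieces gives the identity for $K+1$.

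The one point that needs care is that the matrices need not commute. The split must therefore peel off the \emph{rightmost} factor $A_{K+1}$ and push the leftover factor $B$ to the right. This matches the ordering in the statement, where the $A$'s sit on the left and powers of $B$ on the right. Peeling off the leftmost factor instead would produce the mirrored identity with $B$-powers on the left, which is not the form stated. No other obstacle arises, because the argument uses only associativity and distributivity of matrix multiplication.
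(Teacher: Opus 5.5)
Your proof is correct and follows the paper's argument essentially verbatim: induction on $K$ with the empty-product base case, adding and subtracting $\left(\prod_{i=1}^K A_i\right)B$ to split off the $i=K+1$ term, and right-multiplying the inductive hypothesis by $B$ to turn each $B^{K-i}$ into $B^{K+1-i}$. One small aside: your closing claim that peeling off the leftmost factor would necessarily give the mirrored identity is overstated, since writing the left side as $A_1\left(\prod_{i=2}^{K+1} A_i - B^K\right) + (A_1 - B)B^K$ and applying the hypothesis to $A_2,\dots,A_{K+1}$ also yields the stated form, but this remark plays no role in your proof.
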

\begin{proof}
    We prove by induction. 
    
    With K = 1, we have $A_1 - B$ = $I(A_1 - B)B^0 = A_1 - B$, since the product term is the identity when the bottom is less than the top. 

    Now we suppose this is true for K. Then, for K + 1, we have \begin{align*}
        \prod_{i=1}^{K+1} A_i - B^{K+1} &= \\
        \left(\prod_{i=1}^{K} A_i\right) A_{K+1} - B^KB &= \\
        \left(\prod_{i=1}^{K} A_i\right) A_{K+1} - \left(\prod_{i=1}^{K} A_i\right)B + \left(\prod_{i=1}^{K} A_i\right)B - B^KB &= \\
        \left(\prod_{i=1}^{K} A_i\right)(A_{K+1} - B) + \left(\prod_{i=1}^{K} A_i - B^K\right)B &= \\
        \left(\prod_{i=1}^{K} A_i\right)(A_{K+1} - B) + \sum_{i=1}^K \left(\prod_{j=1}^{i-1} A_j\right) (A_i - B)B^{K - i}B &= \\
        \left(\prod_{i=1}^{K} A_i\right)(A_{K+1} - B) + \sum_{i=1}^K \left(\prod_{j=1}^{i-1} A_j\right) (A_i - B)B^{(K + 1) - i} &= \\
        \sum_{i=1}^{K+1} \left(\prod_{j=1}^{i-1} A_j\right) (A_i - B)B^{(K + 1) - i}
    \end{align*}
\end{proof}

We then apply Lemma \ref{lem:telescope} to the first term and get that \begin{align*}
    \norm{\left(\left(\prod_{j=T - i + 1}^T A_j\right) - A_*^i\right)}_* &= \\
    \norm{\left(\left(\prod_{j=1}^{i} A_{j+T-i}\right) - A_*^i\right)}_* &= \\
    \norm{\sum_{j=1}^i \left(\prod_{k=1}^{j-1} A_{k+T-i}\right) (A_{j+T-i} - A_*)A_*^{i - j}}_* &\leq \\
    \sum_{j=1}^i \norm{\prod_{k=1}^{j-1} A_{k+T-i}(A_{j+T-i} - A_*)A_*^{i - j}}_* &\leq \\
    \sum_{j=1}^i \prod_{k=1}^{j-1} \norm{ A_{k+T-i}}_* \norm{(A_{j+T-i} - A_*)}_*\norm{A_*}_*^{i - j}
\end{align*}

Each factor in the first term has index greater than N, so we can bound $\norm{ A_{k+T-i}}_* < \frac{1 + \rho}{2}$. The latter term is similarly bound: $\norm{A_*}_*^{i - j} < (\frac{1 + 3\rho}{4})^{i-j} < (\frac{1 + \rho}{2})^{i-j}$. The middle term is bound by $\alpha$, as it also has index greater than N. Thus the entire Jacobian is bounded by
\begin{align*}
    \sum_{j=1}^i \left(\prod_{k=1}^{j-1} \frac{1 + \rho}{2} \right) \alpha\left(\frac{1 + \rho}{2}\right)^{i - j} &= \\
    \sum_{j=1}^i \left(\frac{1 + \rho}{2}\right)^{j - 1} \alpha \left(\frac{1 + \rho}{2}\right)^{i - j} &= \\
    \sum_{j=1}^i \left(\frac{1 + \rho}{2}\right)^{i - 1} \alpha &\leq \\
    i \left(\frac{1 + \rho}{2}\right)^i \alpha
\end{align*}

Going back to the original series, we have that \begin{align*}
    \sum_{i=0}^{T-N-1} \left(\norm{\left(\left(\prod_{j=T - i + 1}^T A_j\right) - A_*^i\right)}_*\norm{B_{T-i}}_* + \norm{A_*}_*^i\norm{B_{T-i} - B_*}_*\right) &\leq \\
    \sum_{i=0}^{T-N-1} \left( \left(i \left(\frac{1 + \rho}{2}\right)^{i-1} \alpha M\right) + \alpha \left(\frac{1 + \rho}{2}\right)^i \right) &= \\
    \alpha \sum_{i=0}^{T-N-1} \left( \left(i \left(\frac{1 + \rho}{2}\right)^{i-1} M\right) + \left(\frac{1 + \rho}{2}\right)^i \right)
\end{align*}
Both terms converge to a positive number, since the right term is a geometric series with $|r| < 1$ and the first is the derivative of a geometric series with converged sum equal to $\frac{1}{(1 - r)^2}$ for $|r| < 1$. Thus, the sum is bounded by a constant, and the series is bounded by $C\alpha$. 

We then have by the triangle inequality that \[
\underset{T \rightarrow \infty}{\lim} \norm{S_T - (I - A_*)^{-1}B_*}_* \leq \underset{T \rightarrow \infty}{\lim} \norm{S_T - S_T^*}_* + \underset{T \rightarrow \infty}{\lim} \norm{S_T^* - (I - A_*)^{-1}B_*}_*\]

The first term can be made arbitrarily small by setting $\alpha \rightarrow 0$ (and thus choosing large T and N), and the latter term is equal to 0. Thus, 
\[
\lim_{T \rightarrow \infty} \dv{x_T}{x_0} = \left(I - \pdv{f(x^*, x_0)}{x^*}\right)^{-1} \pdv{f(x^*, x_0)}{x_0}
\]
completing the proof.

\subsection{Explicit Jacobian Forms for External/Internal Recall}\label{app:explicit}
Here, we provide explicit forms of the Jacobians $\pdv{x_t}{x_{t-1}} = \pdv{f(x_{t-1}, x_0)}{x_{t-1}} $ and $\pdv{x_t}{x_0} = \pdv{f(x_{t-1}, x_0)}{x_0}$ for both internal and external recall. For notational simplicity, we let $\Tilde{z}_t, \Tilde{x}_t$ represent the result prior to applying $\phi_1, \phi_2$ respectively; i.e. $z_t = \phi_1(\Tilde{z}_t)$ and $x_t = \phi_2(\Tilde{x_t})$. Note below the presence of the outer normalization Jacobians in the explicit forms, which can shrink the spectral radius if they are small. 

For ease of memory, we redefine external and internal recall below:

External:
\begin{align*}
    x_{t+1} &= \phi_2(z_t + h_2(z_t)) \\
    z_t &= \phi_1(g(x_t, x_0) + h_1(g(x_t, x_0)))
\end{align*}

Internal:
\begin{align*}
    x_{t+1} = \phi_2(z_t + h_2(g(z_t, x_0))) \\
    z_t = \phi_1(x_t + h_1(g(x_t, x_0)))
\end{align*} 

\textbf{External Recall:} 
    \begin{align*}
        \dv{x_t}{x_{t-1}} &= \\
        \dv{x_t}{\Tilde{x}_t}\dv{\Tilde{x}_t}{z_{t-1}}\dv{z_{t-1}}{\Tilde{z}_{t-1}}\pdv{\Tilde{z}_{t-1}}{x_{t-1}} &= \\
        \dv{\phi_2(\Tilde{x}_t)}{\Tilde{x}_t} \left(I + \dv{h_2(z_{t-1})}{z_{t-1}} \right) \dv{\phi_1(\Tilde{z}_{t-1})}{\Tilde{z}_{t-1}} \left(\pdv{g(x_{t-1}, x_0)}{x_{t-1}} + \dv{h_1(g(x_{t-1}, x_0))}{g(x_{t-1}, x_0)}\pdv{g(x_{t-1}, x_0)}{x_{t-1}} \right) &= \\
        \dv{\phi_2(\Tilde{x}_t)}{\Tilde{x}_t} \left(I + \dv{h_2(z_{t-1})}{z_{t-1}} \right) \dv{\phi_1(\Tilde{z}_{t-1})}{\Tilde{z}_{t-1}} \left(I + \dv{h_1(g(x_{t-1}, x_0))}{g(x_{t-1}, x_0)} \right) \pdv{g(x_{t-1}, x_0)}{x_{t-1}}
    \end{align*}
    \textbf{Internal Recall:}
    \begin{align*}
        \dv{x_t}{x_{t-1}} &= \\
        \dv{x_t}{\Tilde{x}_t}\pdv{\Tilde{x}_t}{z_{t-1}}\dv{z_{t-1}}{\Tilde{z}_{t-1}}\pdv{\Tilde{z}_{t-1}}{x_{t-1}} &= \\
        \dv{\phi_2(\Tilde{x}_t)}{\Tilde{x}_t} \left(I + \dv{h_2(g(z_{t-1}, x_0))}{g(z_{t-1}, x_0)} \pdv{g(z_{t-1}, x_0)}{z_{t-1}}\right)\dv{\phi_1(\Tilde{z}_{t-1})}{\Tilde{z}_{t-1}} \left(I + \dv{h_1(g(x_{t-1}, x_0))}{g(x_{t-1}, x_0)} \pdv{g(x_{t-1}, x_0)}{x_{t-1}}\right)
    \end{align*}

    \subsubsection{Jacobian of $\pdv{x_t}{x_0}$}

    \paragraph{External Recall:}
    \begin{align*}
        \pdv{x_t}{x_0} &= \\
        \dv{x_t}{\Tilde{x}_t} \dv{\Tilde{x}_t}{z_{t-1}} \dv{z_{t-1}}{\Tilde{z}_{t-1}}\pdv{\Tilde{z}_{t-1}}{x_0} &= \\
        \dv{\phi_2(\Tilde{x}_t)}{\Tilde{x}_t} 
        \left(I + \dv{h_2(z_{t-1})}{z_{t-1}}\right)\dv{\phi_1(\Tilde{z}_{t-1})}{\Tilde{z}_{t-1}}\left(\pdv{g(x_{t-1}, x_0)}{x_0} + \dv{h_1(g(x_{t-1}, x_0)}{g(x_{t-1}, x_0)}\pdv{g(x_{t-1}, x_0)}{x_0} \right) &= \\
        \dv{\phi_2(\Tilde{x}_t)}{\Tilde{x}_t} 
        \left(I + \dv{h_2(z_{t-1})}{z_{t-1}}\right)\dv{\phi_1(\Tilde{z}_{t-1})}{\Tilde{z}_{t-1}}\left(I + \dv{h_1(g(x_{t-1}, x_0)}{g(x_{t-1}, x_0)}\right) \pdv{g(x_{t-1}, x_0)}{x_0}
    \end{align*}
    
    \paragraph{Internal Recall:}
    \begin{align*}
        \pdv{x_t}{x_0} &= \\
        \dv{x_t}{\Tilde{x}_t} \left(\pdv{\Tilde{x}_t}{x_0} + \dv{\Tilde{x}_t}{z_{t-1}} \dv{z_{t-1}}{\Tilde{z}_{t-1}}\pdv{\Tilde{z}_{t-1}}{x_0}\right) &= \\
        \dv{\phi_2(\Tilde{x}_t)}{\Tilde{x}_t} \bigg(\dv{h_1(g(z_{t-1}, x_0))}{g(z_{t-1}, x_0)}\pdv{g(z_{t-1}, x_0)}{x_0} + \left(I + \dv{h_2(g(z_{t-1}, x_0))}{g(z_{t-1}, x_0)}\pdv{g(z_{t-1}, x_0)}{z_{t-1}}\right)\\
        \qquad \cdot \dv{\phi_1(\Tilde{z}_{t-1})}{\Tilde{z}_{t-1}} \dv{h_1(g(x_{t-1}, x_0))}{g(x_{t-1}, x_0)}\pdv{g(x_{t-1}, x_0)}{x_0}\bigg)
    \end{align*}

\subsection{Local-Independence of $e$ in Stable Regimes}\label{app:independent-e}
\begin{proof}
    We wish to prove that in the stable regime found in Section \ref{sec:recall-outer}, the resulting fixed point is locally independent of $e$, the initial input to recall. Importantly, in both external and internal recall, this initial iterate \textit{does not affect $x_0$}, only later $x_t$. Thus, we have that $\dv{x_t}{e} = \dv{x_t}{x_{t-1}}\dv{x_{t-1}}{e}$. We unroll this back and see that \[
    \dv{x_T}{e} = \left(\prod_{i=2}^T \dv{x_t}{x_{t-1}}\right) \pdv{x_1}{e}
    \]

    As proven in Appendix \ref{app:input-gradient}, we know that when $\rho\left(\pdv{f(x^*, x_0)}{x^*}\right) < 1$ (i.e. when in the stable regime) the term $\prod_{i=2}^T \dv{x_t}{x_{t-1}}$ converges to 0 as $T \rightarrow \infty$. Thus the fixed point becomes \textit{locally-independent} of the initial iterate e while remaining \textit{dependent} on $x_0$ as proven in Appendix \ref{app:input-gradient}. 
\end{proof}

\section{Empirical Analyses}
\subsection{Experimental Setup}
\subsubsection{Data}\label{app:data}
We train on three datasets -- prefix-sums, sudoku, and chess -- each designed to test generalization along a different axis.

\paragraph{Prefix-sums:} We obtained prefix-sum data from \cite{bansal_end--end_2022}. Each question inputs an N-bit binary sequence, and outputs a same-length sequence with each bit K the sum of all preceding input bits, mod 2. We train and validate the model on 32-bit data (8000 and 2000 points, respectively) and test on 512-bit data (10000 points). We note that this is the \textit{only} problem whose difficulty is directly determined by its \textit{length}, rather than some alternative internal mechanism. 

\paragraph{Sudoku:} We obtained sudoku data from Sapient Intelligence's sudoku-extreme dataset (\citep{wang_hierarchical_2025}) that defines difficulty as the number of backtracks needed on the Tdoku solver (\citep{tdoku}) to solve the puzzle. To separate \say{easy} from \say{hard} examples, we calculated the 80th percentile of backtracks needed throughout the dataset (38 backtracks). We then trained on 40000 sudoku puzzles with fewer than the 80th percentile of backtracks, validated on 10000 of the same difficulty, and tested on 10000 puzzles with more than the 80th percentile of backtracks. Sudoku uses 10 input channels and 10 output channels (one each from 0-9).

\paragraph{Chess:} We train our models on data from the Lichess (\citep{lichess}) puzzles database, which provides FEN data for given chess board states as well as the corresponding best next move. We train our model on puzzles with ELO ratings less than 1600, and validate on the same; we test the model on harder puzzles with ratings between 1600-2000. Lichess puzzles remain the same size across training and testing ($8 \times 8$) and have 12 input channels (6 pieces, 2 players) and 2 output channels (1s on the from and to spots on the board). We normalize orientation so the model always sees \say{white to move}. We use 40000 training examples and 10000 of both validation and testing examples.

\begin{table}[h]
\centering
\caption{Dataset summary. For prefix-sums, train and test use different sequence lengths (32-bit and 512-bit, respectively), as length directly determines difficulty. Note that prefix-sums and chess both use two-logit classification.}
\label{tab:datasets}
\begin{tabular}{lccccc}
\toprule
\textbf{Dataset} & \textbf{Dimensions} & \textbf{In Channels} & \textbf{Out Channels} & \textbf{Train} & \textbf{Test} \\
\midrule
Prefix-Sums & $32 \to 512$ bits & 1 & 2 & 8{,}000 & 10{,}000 \\
Sudoku       & $9 \times 9$       & 10 & 10 & 40{,}000 & 10{,}000 \\
Chess        & $8 \times 8$       & 12 & 2  & 40{,}000 & 10{,}000 \\
\bottomrule
\end{tabular}
\end{table}

\subsubsection{Training Details}\label{app:training}

Across all datasets, we use a single-layer transformer with a hidden dimension of 256. The attention module uses 8 heads (head dimension 64), QK-normalization \citep{henry2020querykeynormalizationtransformers} for training stability, and multi-dimensional RoPE \citep{ropend} to provide consistent positional structure across both 1D (prefix-sums) and 2D (sudoku, chess) inputs. For sudoku and chess, we use full bidirectional attention. For prefix-sums, RoPE causes length generalization failures, so we instead use a sliding attention window of width 5. The MLP uses a hidden dimension of 1024 with GELU activations \citep{hendrycks2023gaussianerrorlinearunits}. We apply RMSNorm at all normalization sites. We sweep over four RMSNorm-based normalization configurations: no outer normalization (none), pre-norm (pre), post-norm (post), and peri-norm (peri) \citep{kim2025perilnrevisitingnormalizationlayer}, in addition to GRU outer normalization (gru). For models with GRU outer normalization, we replace the standard residual connection $x_{t+1} = x_t + h_\theta(x_t)$ with $x_{t+1} = \text{GRU}(x_t, h_\theta(x_t))$, where the recurrent state is $x_t$ and the update is $h_\theta(x_t)$. For recall models, we use linear recall, i.e., $g(x_t, x_0) = W_x x_t + W_0 x_0$.  Looped models with GRU had 0.98M parameters, while models without it had 1.39M. Fixed (non-looped) models used 15 layers and had 11.9M parameters without GRU and 17.8M parameters with it. Recall accounted for 0.13M parameters in models that used it. 

We note that while recall does add parameters, its benefits are not primarily attributable to parameter count. If they were, fixed-depth models -- which have an order of magnitude more parameters -- would dominate across tasks, yet they are outperformed by recall variants on sudoku and prefix-sums, and only match them on chess (the least naturally iterative of the three). \cite{bansal_end--end_2022} make a similar observation, noting that the gains from recall far exceed what additional depth or width provides in non-recall models.

We train with AdamW \citep{loshchilov2019decoupledweightdecayregularization} (weight decay 0.01) for 100 epochs with a batch size of 500. We apply an exponential warmup over the first 10 epochs, reaching $63\%$ of the target learning rate by epoch 9, followed by a constant learning rate through epoch 60, after which we apply a $10\times$ cooldown. Gradients are clipped at norm 1. 
 
Within each batch, we apply the progressive loss scheme of \citet{bansal_end--end_2022}. We sample $N \sim \mathcal{U}[0, T-1]$ and $K \sim \mathcal{U}[1, T-1-N]$, where $T = 30$ is the maximum number of iterations. The model runs $N$ iterations without gradients followed by $K$ iterations with gradients; loss is computed and backpropagated only over the final $K$ iterations. This is intended to discourage the model from learning iteration-specific shortcuts.

Experiments were conducted in PyTorch \citep{DBLP:journals/corr/abs-1912-01703}, and analyses used NumPy \citep{harris2020array} and Pandas \citep{mckinney-proc-scipy-2010}.
\begin{table}[h]
\centering
\caption{Hyperparameter summary for all experiments.}
\label{tab:hyperparams}
\begin{tabular}{ll}
\toprule
\textbf{Hyperparameter} & \textbf{Value} \\
\midrule
Hidden dimension        & 256 \\
Attention heads         & 8 (head dim 64) \\
MLP hidden dimension    & 1024 \\
Activation              & GELU \\
Normalization           & RMSNorm \\
Optimizer               & AdamW \\
Weight decay            & 0.01 \\
Batch size              & 500 \\
Epochs                  & 100 \\
LR warmup               & Epochs 0--9 (exponential) \\
LR cooldown             & $10\times$ reduction at epoch 60 \\
Gradient clip norm      & 1 \\
Max iterations ($T$)    & 30 \\
\bottomrule
\end{tabular}
\end{table}

\subsubsection{Compute Usage}
Our final experiments used $120$ NVIDIA B200 GPU-hours, which were acquired via an academic SLURM cluster. All models were trained on a single GPU. We trained all models with BF16 casting and sped up training via PyTorch's compile feature. 

\subsection{Anisotropy Analysis}\label{app:anisotropy}

\begin{table}[h!]
\centering
\footnotesize
\caption{Projected-point anisotropy; $n = 10000$ draws per $\sigma$. Parens after means: SE ($\hat\sigma/\sqrt{n}$)}
\label{tab:anisotropy-projection}
\setlength{\tabcolsep}{5pt}
\begin{tabular}{@{}l cc cc cc cc@{}}
\toprule
 & \multicolumn{4}{c}{log-ratio} & \multicolumn{4}{c}{balance} \\
\cmidrule(lr){2-5} \cmidrule(lr){6-9}
 & \multicolumn{2}{c}{mean} & \multicolumn{2}{c}{median} & \multicolumn{2}{c}{mean} & \multicolumn{2}{c}{median} \\
\cmidrule(lr){2-3} \cmidrule(lr){4-5} \cmidrule(lr){6-7} \cmidrule(lr){8-9}
$\sigma$ & int & ext & int & ext & int & ext & int & ext \\
\midrule
0.5 & 9.3359 (0.0831) & 1.1524 (0.0102) & 4.6835 & 0.8946 & 0.2250 (0.0030) & 0.4417 (0.0028) & 0.0092 & 0.4088 \\
1 & 9.6840 (0.0865) & 1.2031 (0.0098) & 4.6835 & 1.0320 & 0.2225 (0.0030) & 0.4144 (0.0027) & 0.0092 & 0.3563 \\
2 & 10.1580 (0.0888) & 1.4412 (0.0093) & 4.9472 & 1.4045 & 0.1803 (0.0026) & 0.3315 (0.0025) & 0.0071 & 0.2455 \\
4 & 11.1402 (0.0884) & 1.9594 (0.0106) & 18.1294 & 1.7632 & 0.0943 (0.0017) & 0.2197 (0.0020) & 0.0000 & 0.1715 \\
\bottomrule
\end{tabular}
\end{table}

Figure \ref{fig:stability_regions} in Section \ref{sec:recall-no-outer} showcases the stability regions for a simple one-layer model with overlapping eigenvectors. To understand how these regions affect the resulting chosen Jacobian eigenvalues, we conduct a simple projection-based experiment. 

In particular, we begin by sampling points from $\mathcal{N}(0, \sigma^2I_{2 \times 2})$ and then project them onto the internal and external stability regions. For each projected point, we calculate two metrics to analyze anisotropy: \begin{itemize}
    \item Log-range: $\abs{\log \frac{\abs{x} + \epsilon}{\abs{y} + \epsilon}}$, where more isotropic points have values closer to 0.
    \item Balance: $\frac{\min(\abs{x}, \abs{y}) + \epsilon}{\max(\abs{x}, \abs{y}) + \epsilon}$, where more isotropic points have values closer to 1.
\end{itemize}

Table \ref{tab:anisotropy-projection} showcases the result of this simulation. Across all choices of $\sigma$, internal recall is \textit{substantially} more anisotropic, having a median log-range at least three times as large as external recall and a median balance at least 4 times as small. 

\subsection{Effect of learning rates on $\rho\left(\pdv{g(x_t, x_0)}{x_t}\right)$}\label{app:lr-rho}

\begin{figure}[h!]
    \centering
    \textbf{Effect of Learning Rate on Recall Spectral Radius}
    \includegraphics[width=\linewidth]{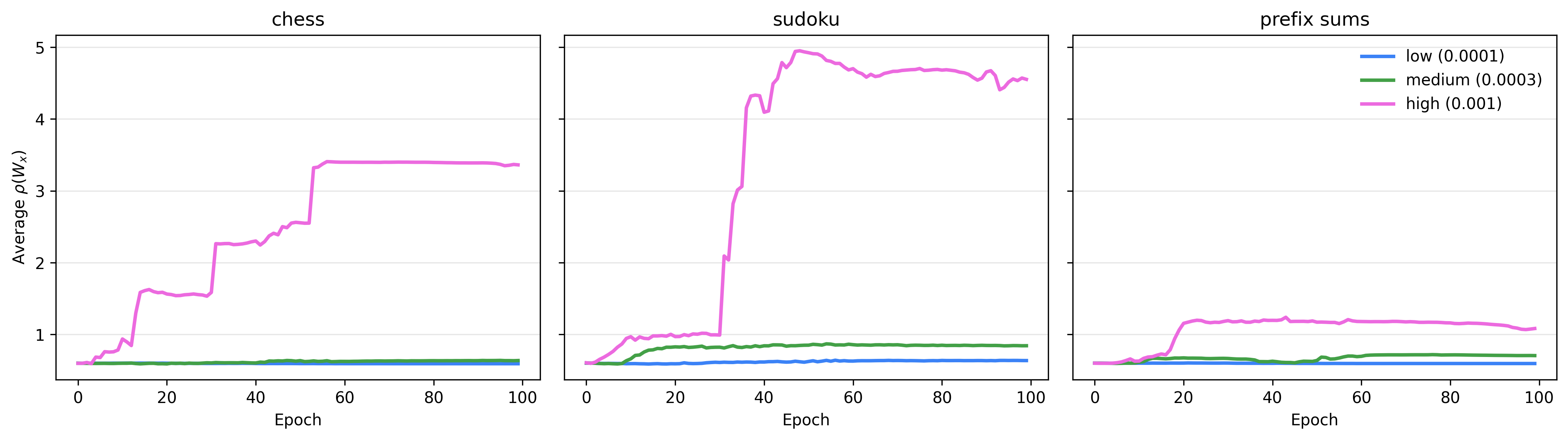}\\[1em]
    \includegraphics[width=\linewidth]{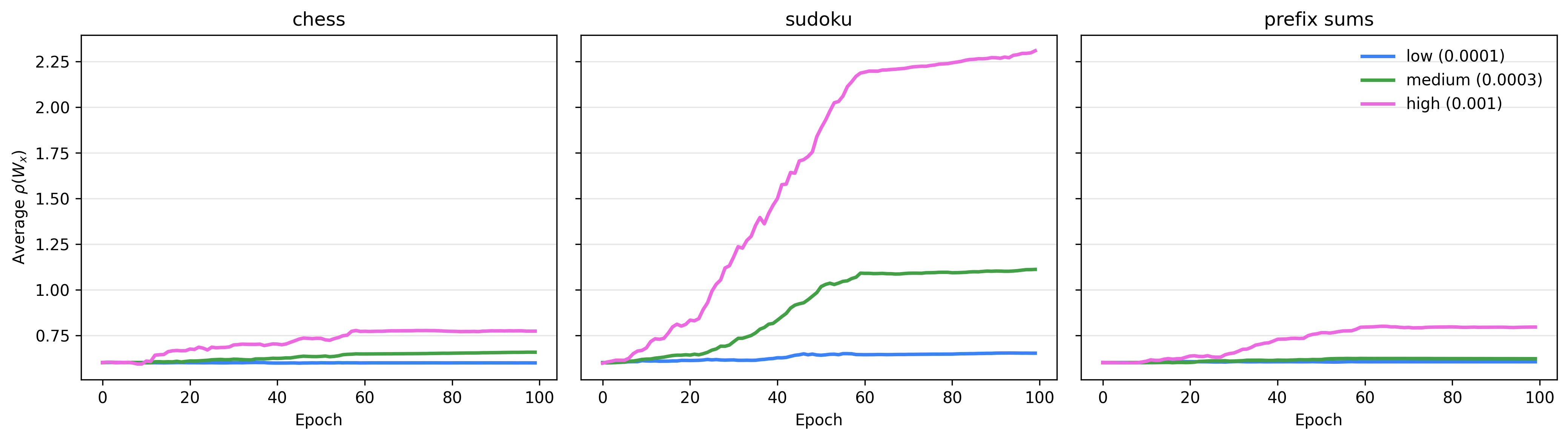}
    \caption{Effect of LR on non-outer-normalized internal and external recall Jacobian spectral radius. Top is external recall, bottom is internal recall. Chess and prefix sums use ($0.0001, 0.0003, 0.001$); sudoku uses ($0.0003, 0.001, 0.003)$}
    \label{fig:lr-spectral}
\end{figure}

In Section \ref{sec:recall-no-outer}, we discuss the effects of learning rate on the spectral radius of $\pdv{g(x^*, x_0)}{x^*}$. Figure \ref{fig:lr-spectral} makes this explicit: across all problems and recall choices (averaged over pre/peri norm), higher LR corresponds to larger spectral radius for $\pdv{g(x^*, x_0)}{x^*}$. We use $g(x_t, x_0) = W_x x_t + W_0 x_0$, so $\pdv{g(x^*, x_0)}{x^*} = W_x$. 

\subsection{Effect of $\rho\left(\pdv{g(x_t, x_0)}{x_t}\right)$ on External Recall Performance}
\begin{figure}[!htb]
    \centering
    \begin{minipage}{0.5\textwidth}
        \centering
        \includegraphics[width=\linewidth]{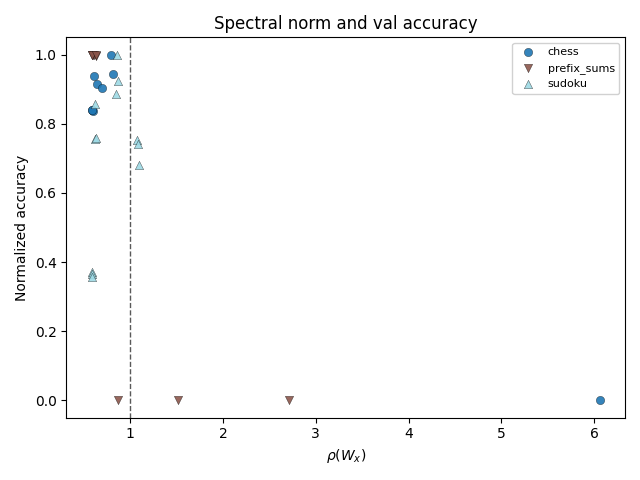}
    \end{minipage}\hfill
    \begin{minipage}{0.5\textwidth}
        \centering
        \includegraphics[width=\linewidth]{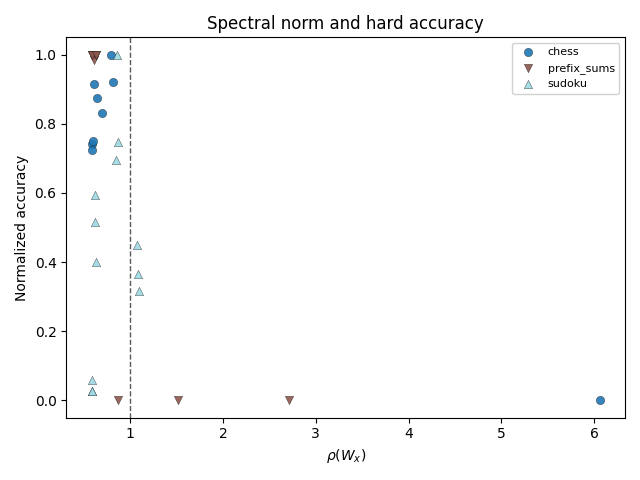}
    \end{minipage}
    \caption{Validation and hard accuracy across problems.}
    \label{fig:spectral}
\end{figure}

In section \ref{sec:recall-no-outer}, we discuss how, without outer normalization, 
external recall can have difficulty creating reachable fixed points when $\pdv{g(x^*, x_0)}{x^*}$ becomes large, as the other eigenvalues have to compensate by shifting towards a complex unit ball centered around $-1$. Here, we provide empirical evidence for this fact: Figure \ref{fig:spectral} compares final accuracy for external recall, pre-norm models across tasks as a function of $\rho\left(\pdv{g(x^*, x_0)}{x^*}\right) = \rho\left(\pdv{g(x_t, x_0)}{x_t}\right) = \rho(W_x)$ for linear recall mechanism $g(x_t, x_0) = W_x x_t + W_0 x_0$. Across both validation and hard data, no model achieves non-zero accuracy when the recall Jacobian substantially exceeds 1.

\subsection{Seed-Averaged Results for All Models}\label{app:full-results}
Here we provide the validation and hard accuracy for all (problem, norm, lr, recall) tuples, averaged over 3 seeds. Each value is \say{total} accuracy; i.e. not bitwise accuracy, but rather the percent of inputs whose answer is completely correct. Sudoku consistently required higher learning rates to reach equivalent accuracy; as such, we report higher LR here and report the full LR sweep in Appendix \ref{app:sudoku}.

\begin{table}[h!]
\caption{Accuracy on validation (easy distribution, but not trained on) data}
\label{tab:val}
\vspace{0.5em}
\centering
\setlength{\tabcolsep}{4.5pt}
\small
\begin{tabular}{@{}ll *{9}{S[table-format=3.2]}@{}}
\toprule
\multirow{2}{*}{Norm} & \multirow{2}{*}{Recall} & \multicolumn{3}{c}{Chess} & \multicolumn{3}{c}{Sudoku} & \multicolumn{3}{c}{Prefix\_sums} \\
\cmidrule(lr){3-5}\cmidrule(lr){6-8}\cmidrule(lr){9-11}
 &  & \multicolumn{1}{c}{\footnotesize\num{1e-4}} & \multicolumn{1}{c}{\footnotesize\num{3e-4}} & \multicolumn{1}{c}{\footnotesize\num{1e-3}} & \multicolumn{1}{c}{\footnotesize\num{3e-4}} & \multicolumn{1}{c}{\footnotesize\num{1e-3}} & \multicolumn{1}{c}{\footnotesize\num{3e-3}} & \multicolumn{1}{c}{\footnotesize\num{1e-4}} & \multicolumn{1}{c}{\footnotesize\num{3e-4}} & \multicolumn{1}{c}{\footnotesize\num{1e-3}} \\
\midrule
\multirow{4}{*}{gru} & internal & 58.59 & 72.16 & 63.82 & 26.92 & 37.45 & 24.76 & 99.98 & 86.03 & 64.97 \\
 & external & 69.45 & 74.70 & 79.07 & 29.66 & 39.87 & 0.00 & 100.00 & 65.60 & 64.97 \\
 & none & 30.85 & 44.61 & 33.52 & 1.69 & 3.52 & 22.31 & 92.77 & 59.82 & 12.73 \\
 & fixed & 72.36 & 76.06 & 77.96 & 2.63 & 2.94 & 0.89 & 5.68 & 0.00 & 0.00 \\
\specialrule{0.2pt}{0.15em}{0.15em}
\multirow{4}{*}{peri} & internal & 39.61 & 53.78 & 58.16 & 15.52 & 40.24 & 22.19 & 99.40 & 97.82 & 47.57 \\
 & external & 65.96 & 72.51 & 51.29 & 35.48 & 51.98 & 42.65 & 100.00 & 100.00 & 0.00 \\
 & none & 30.46 & 28.36 & 19.19 & 12.33 & 29.02 & 0.00 & 26.07 & 0.00 & 0.00 \\
 & fixed & 71.13 & 74.29 & 76.53 & 2.73 & 3.10 & 3.30 & 9.27 & 35.52 & 0.05 \\
\specialrule{0.2pt}{0.15em}{0.15em}
\multirow{4}{*}{post} & internal & 27.84 & 70.88 & 72.78 & 24.55 & 57.94 & 41.65 & 100.00 & 99.15 & 31.05 \\
 & external & 61.68 & 73.21 & 73.33 & 24.14 & 45.75 & 26.97 & 99.65 & 99.78 & 32.03 \\
 & none & 32.10 & 23.42 & 0.00 & 25.39 & 0.00 & 0.00 & 0.00 & 0.00 & 0.00 \\
 & fixed & 70.29 & 74.85 & 79.90 & 2.38 & 3.26 & 0.00 & 25.67 & 23.78 & 0.00 \\
\specialrule{0.2pt}{0.15em}{0.15em}
\multirow{4}{*}{pre} & internal & 41.02 & 52.68 & 43.57 & 4.99 & 22.39 & 19.27 & 97.60 & 99.82 & 97.50 \\
 & external & 69.52 & 75.81 & 0.00 & 34.94 & 53.32 & 0.00 & 100.00 & 66.67 & 0.00 \\
 & none & 33.37 & 28.26 & 17.75 & 2.62 & 15.50 & 11.07 & 65.32 & 0.00 & 0.00 \\
 & fixed & 71.27 & 75.18 & 78.24 & 2.58 & 3.06 & 4.30 & 38.92 & 81.58 & 10.07 \\
\bottomrule
\end{tabular}
\end{table}

\begin{table}[h!]
\caption{Accuracy on hard data}
\label{tab:hard}
\vspace{0.5em}
\centering
\setlength{\tabcolsep}{4.5pt}
\small
\begin{tabular}{@{}ll *{9}{S[table-format=3.2]}@{}}
\toprule
\multirow{2}{*}{Norm} & \multirow{2}{*}{Recall} & \multicolumn{3}{c}{Chess} & \multicolumn{3}{c}{Sudoku} & \multicolumn{3}{c}{Prefix\_sums} \\
\cmidrule(lr){3-5}\cmidrule(lr){6-8}\cmidrule(lr){9-11}
 &  & \multicolumn{1}{c}{\footnotesize\num{1e-4}} & \multicolumn{1}{c}{\footnotesize\num{3e-4}} & \multicolumn{1}{c}{\footnotesize\num{1e-3}} & \multicolumn{1}{c}{\footnotesize\num{3e-4}} & \multicolumn{1}{c}{\footnotesize\num{1e-3}} & \multicolumn{1}{c}{\footnotesize\num{3e-3}} & \multicolumn{1}{c}{\footnotesize\num{1e-4}} & \multicolumn{1}{c}{\footnotesize\num{3e-4}} & \multicolumn{1}{c}{\footnotesize\num{1e-3}} \\
\midrule
\multirow{4}{*}{gru} & internal & 27.32 & 36.67 & 29.64 & 1.53 & 6.88 & 0.73 & 83.41 & 66.05 & 64.67 \\
 & external & 34.39 & 37.84 & 42.22 & 3.13 & 7.82 & 0.00 & 100.00 & 54.03 & 64.38 \\
 & none & 11.37 & 17.57 & 12.72 & 0.01 & 0.15 & 0.34 & 0.02 & 0.00 & 0.00 \\
 & fixed & 35.52 & 39.57 & 40.63 & 0.00 & 0.01 & 0.00 & 0.00 & 0.00 & 0.00 \\
\specialrule{0.2pt}{0.15em}{0.15em}
\multirow{4}{*}{peri} & internal & 15.53 & 23.63 & 25.06 & 1.43 & 11.66 & 2.04 & 0.00 & 0.02 & 0.00 \\
 & external & 31.09 & 36.80 & 26.95 & 7.15 & 21.50 & 9.98 & 99.46 & 99.96 & 0.00 \\
 & none & 11.26 & 10.56 & 6.85 & 1.11 & 4.05 & 0.00 & 0.00 & 0.00 & 0.00 \\
 & fixed & 34.22 & 37.09 & 38.57 & 0.00 & 0.03 & 0.06 & 0.00 & 0.00 & 0.00 \\
\specialrule{0.2pt}{0.15em}{0.15em}
\multirow{4}{*}{post} & internal & 13.69 & 36.11 & 36.62 & 4.86 & 35.95 & 9.33 & 98.16 & 99.00 & 31.08 \\
 & external & 27.70 & 36.55 & 36.92 & 2.21 & 13.91 & 2.12 & 56.20 & 99.74 & 1.09 \\
 & none & 11.40 & 9.86 & 0.00 & 5.01 & 0.00 & 0.00 & 0.00 & 0.00 & 0.00 \\
 & fixed & 34.06 & 37.15 & 42.16 & 0.00 & 0.06 & 0.00 & 0.00 & 0.00 & 0.00 \\
\specialrule{0.2pt}{0.15em}{0.15em}
\multirow{4}{*}{pre} & internal & 16.14 & 22.17 & 17.21 & 0.02 & 0.67 & 0.01 & 0.00 & 0.05 & 0.00 \\
 & external & 34.64 & 39.29 & 0.00 & 6.46 & 23.22 & 0.00 & 100.00 & 66.64 & 0.00 \\
 & none & 12.69 & 10.61 & 6.04 & 0.00 & 0.11 & 0.01 & 0.00 & 0.00 & 0.00 \\
 & fixed & 34.76 & 37.43 & 40.42 & 0.00 & 0.02 & 0.32 & 0.00 & 0.00 & 0.00 \\
\bottomrule
\end{tabular}
\end{table}

\subsection{Sudoku Results}\label{app:sudoku}
For transparency, we report our results on sudoku across all four learning rates. We found throughout our analysis that sudoku consistently needed larger learning rates regardless of model architecture.

\begin{table}[h!]
\caption{Validation (left) and hard (right) accuracy for sudoku across all four learning rates.}
\vspace{0.5em}
\centering
\begin{minipage}{0.48\textwidth}
\centering
\setlength{\tabcolsep}{3pt}
\small
\begin{tabular}{@{}ll *{9}{S[table-format=3.2]}@{}}
\toprule
\multirow{2}{*}{Norm} & \multirow{2}{*}{Recall} & \multicolumn{4}{c}{Sudoku} \\
\cmidrule(lr){3-6}
 &  & \multicolumn{1}{c}{\footnotesize\num{1e-4}} & \multicolumn{1}{c}{\footnotesize\num{3e-4}} & \multicolumn{1}{c}{\footnotesize\num{1e-3}} & \multicolumn{1}{c}{\footnotesize\num{3e-3}} \\
\midrule
\multirow{4}{*}{gru} & internal & 24.93 & 28.91 & 37.45 & 24.76 \\
 & external & 23.96 & 35.36 & 39.87 & 0.00 \\
 & none & 3.39 & 0.00 & 3.52 & 22.31 \\
 & fixed & 2.45 & 2.82 & 2.94 & 0.89 \\
\specialrule{0.2pt}{0.15em}{0.15em}
\multirow{4}{*}{peri} & internal & 2.47 & 28.57 & 40.24 & 22.19 \\
 & external & 24.88 & 46.08 & 51.98 & 42.65 \\
 & none & 0.50 & 24.15 & 29.02 & 0.00 \\
 & fixed & 2.55 & 2.91 & 3.10 & 3.30 \\
\specialrule{0.2pt}{0.15em}{0.15em}
\multirow{4}{*}{post} & internal & 8.60 & 40.50 & 57.94 & 41.65 \\
 & external & 16.76 & 31.52 & 45.75 & 26.97 \\
 & none & 10.91 & 35.04 & 0.00 & 0.00 \\
 & fixed & 2.09 & 2.67 & 3.26 & 0.00 \\
\specialrule{0.2pt}{0.15em}{0.15em}
\multirow{4}{*}{pre} & internal & 1.85 & 8.13 & 22.39 & 19.27 \\
 & external & 24.90 & 44.98 & 53.32 & 0.00 \\
 & none & 0.56 & 4.68 & 15.50 & 11.07 \\
 & fixed & 2.34 & 2.82 & 3.06 & 4.30 \\
\bottomrule
\end{tabular}
\end{minipage}
\hfill
\begin{minipage}{0.48\textwidth}
\centering
\setlength{\tabcolsep}{3pt}
\small
\begin{tabular}{@{}ll *{9}{S[table-format=3.2]}@{}}
\toprule
\multirow{2}{*}{Norm} & \multirow{2}{*}{Recall} & \multicolumn{4}{c}{Sudoku} \\
\cmidrule(lr){3-6}
 &  & \multicolumn{1}{c}{\footnotesize\num{1e-4}} & \multicolumn{1}{c}{\footnotesize\num{3e-4}} & \multicolumn{1}{c}{\footnotesize\num{1e-3}} & \multicolumn{1}{c}{\footnotesize\num{3e-3}} \\
\midrule
\multirow{4}{*}{gru} & internal & 0.13 & 2.93 & 6.88 & 0.73 \\
 & external & 0.81 & 5.46 & 7.82 & 0.00 \\
 & none & 0.02 & 0.00 & 0.15 & 0.34 \\
 & fixed & 0.00 & 0.00 & 0.01 & 0.00 \\
\specialrule{0.2pt}{0.15em}{0.15em}
\multirow{4}{*}{peri} & internal & 0.00 & 2.87 & 11.66 & 2.04 \\
 & external & 1.00 & 13.29 & 21.50 & 9.98 \\
 & none & 0.00 & 2.22 & 4.05 & 0.00 \\
 & fixed & 0.00 & 0.00 & 0.03 & 0.06 \\
\specialrule{0.2pt}{0.15em}{0.15em}
\multirow{4}{*}{post} & internal & 0.59 & 9.12 & 35.95 & 9.33 \\
 & external & 1.16 & 3.27 & 13.91 & 2.12 \\
 & none & 0.83 & 7.79 & 0.00 & 0.00 \\
 & fixed & 0.00 & 0.00 & 0.06 & 0.00 \\
\specialrule{0.2pt}{0.15em}{0.15em}
\multirow{4}{*}{pre} & internal & 0.00 & 0.03 & 0.67 & 0.01 \\
 & external & 1.08 & 11.84 & 23.22 & 0.00 \\
 & none & 0.00 & 0.00 & 0.11 & 0.01 \\
 & fixed & 0.00 & 0.00 & 0.02 & 0.32 \\
\bottomrule
\end{tabular}
\end{minipage}
\end{table}

\newpage $ $\\
\newpage

\bibliographystyle{plainnat}
\bibliography{references}

\end{document}